\def\eqref#1{equation~\ref{#1}}
\def\1{\bm{1}}
\def\vx{{\bm{x}}}
\def\vz{{\bm{z}}}
\DeclareMathAlphabet{\mathsfit}{\encodingdefault}{\sfdefault}{m}{sl}
\SetMathAlphabet{\mathsfit}{bold}{\encodingdefault}{\sfdefault}{bx}{n}
\newcommand{\E}{\mathbb{E}}
\definecolor{nicerandomblue}{HTML}{1C70C3}
\newcommand{\cert}{\ensuremath{T^\star(n, D)}}
\newcommand{\xdi}{\ensuremath{\mathcal{X}}}
\newcommand{\zdi}{\ensuremath{\mathcal{Z}}}
\newcommand{\eg}{e.g., }
\newcommand{\ie}{i.e., }
\newcommand{\indicator}[1]{\mathbbm{1}\{{#1}\}}
\newcommand{\tool}{\textsc{FARE}\xspace}
\newcommand{\toollong}{Fairness with Restricted Encoders\xspace}
\theoremstyle{plain}
\newtheorem{theorem}{Theorem}[section]
\newtheorem{lemma}[theorem]{Lemma}
\theoremstyle{definition}
\newtheorem{definition}[theorem]{Definition}
\theoremstyle{remark}
\icmltitlerunning{FARE: Provably Fair Representation Learning with Practical Certificates}
\begin{document} 
  
\twocolumn[

\icmltitle{FARE: Provably Fair Representation Learning with Practical Certificates}

\begin{icmlauthorlist}
    \icmlauthor{Nikola Jovanovi\'c}{eth}
    \icmlauthor{Mislav Balunović}{eth}
    \icmlauthor{Dimitar I. Dimitrov}{eth}
    \icmlauthor{Martin Vechev}{eth}
\end{icmlauthorlist}

\icmlaffiliation{eth}{Department of Computer Science, ETH Zurich}
\icmlcorrespondingauthor{Nikola Jovanovi\'c}{nikola.jovanovic@inf.ethz.ch}

\icmlkeywords{Machine Learning, Fairness}

\vskip 0.3in 
]

\printAffiliationsAndNotice{}

\begin{abstract}
Fair representation learning (FRL) is a popular class of methods aiming to produce fair classifiers via data preprocessing. Recent regulatory directives stress the need for FRL methods that provide \emph{practical certificates}, \ie provable upper bounds on the unfairness of any downstream classifier trained on preprocessed data, which directly provides assurance in a practical scenario. Creating such FRL methods is an important challenge that remains unsolved. In this work, we address that challenge and introduce FARE (\emph{Fairness with Restricted Encoders}), the first FRL method with practical fairness certificates. FARE is based on our key insight that restricting the representation space of the encoder enables the derivation of practical guarantees, while still permitting favorable accuracy-fairness tradeoffs for suitable instantiations, such as one we propose based on fair trees. To produce a practical certificate, we develop and apply a statistical procedure that computes a finite sample high-confidence upper bound on the unfairness of any downstream classifier trained on FARE embeddings. In our comprehensive experimental evaluation, we demonstrate that FARE produces practical certificates that are tight and often even comparable with purely empirical results obtained by prior methods, which establishes the practical value of our approach.  
\end{abstract}

\section{Introduction} \label{sec:intro}

It has been repeatedly shown that machine learning systems deployed in real-world applications propagate training data biases \citep{CorbettPFGH17,KleinbergMR17}.
This is especially concerning in decision-making applications which use data representing humans (\eg financial or medical), where such biases can lead to unfavorable treatment of certain population subgroups \citep{brennan2009compas,barocas2016big}.
For instance, a loan prediction system might recommend rejection based on a \emph{sensitive attribute}, such as race or gender.
The relevance of fairness (as perceived by companies) has increased the most over the past year compared to any other potential risk of AI~\citep{Chui21,Benaich21}.

\begin{figure*}[t] 
    \centering
    \includegraphics[width=0.87\textwidth]{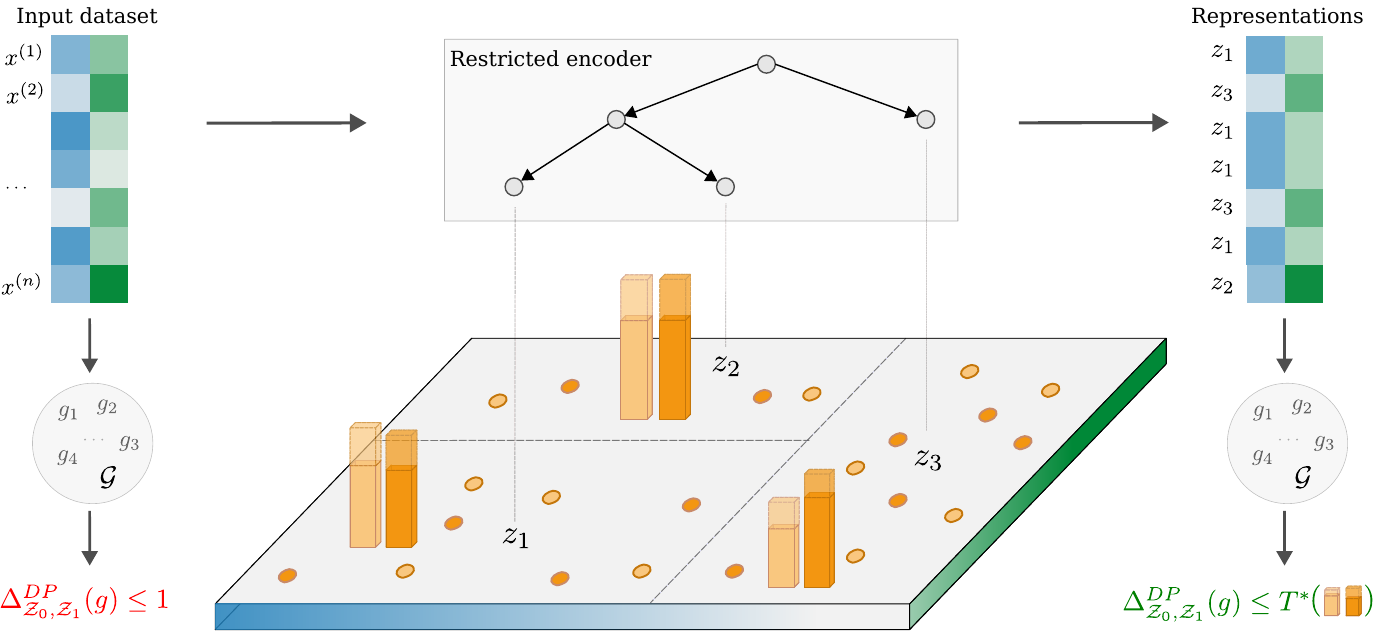}
    \caption{Overview of \tool. The input dataset is transformed into fair representations using a restricted encoder. FARE can compute a certificate $T^\star$ which provably upper-bounds the unfairness of any classifier $g \in \mathcal{G}$ trained on the representations.}
    \label{fig:accept}
\end{figure*} 
 
\paragraph{Fair representation learning}
A promising approach to addressing this issue is \emph{fair representation learning} (FRL, \citet{LFR}), which relies on {separation of responsibility} between two parties---a \emph{data producer} and a \emph{data consumer}. The key promise of FRL is that the data producer, who uses an encoder $f$ to transform each point $\vx \in \mathcal{X}$ into a debiased representation $\vz$, can ensure that these representations can be directly given to any {data consumer} aiming to solve a prediction task, such that \emph{any} classifier $g\in\mathcal{G}$ they train will have favorable fairness, even if they are unaware of fairness or actively aim to discriminate. 
 
\paragraph{The need for certificates}
Recent regulatory directives \citep{ftc2021ai,eu2021proposal} \emph{demand} that parties aiming to deploy ML models ensure the fairness of predictions \citep{DworkHPRZ12}.
In high-stakes applications (\eg judicial), failing to {guarantee} fairness may thus lead to fines or reputational harm~\citep{wired}, in addition to causing societal harm.
This makes it essential that data producers provide \emph{certificates} guaranteeing that any classifier trained on their representations must be fair, which follows the trends in other areas in which certificates are of interest, such as privacy~\citep{DPSGD} or robustness~\citep{AI2}.

\paragraph{Practical certificates}
To achieve this goal, we are interested in \emph{practical} certificates, which upper bound a fairness metric (such as demographic parity distance) \emph{with high probability}, and are computed using a \emph{finite dataset} that the data producer can access. The certificate should not make \emph{any additional assumptions} about the data distribution, and it should hold for \emph{any model} trained on the representations.
Finally, a certificate should provide tight, non-vacuous bounds that can be \emph{computed on real datasets}.
While there has been a lot of work in this direction, none of the prior approaches satisfy all of these requirements (see \cref{sec:requirements}).

\paragraph{This work}
We propose \tool (\toollong, \cref{fig:accept})---the first FRL method that provides practical certificates.
Our key insight is that using a \emph{restricted encoder}, \ie limiting possible representations to a finite set $\{\vz_1, \ldots, \vz_k\}$, allows us to derive a practical statistical procedure that computes a high-confidence upper bound on the unfairness of any $g$, \ie a fairness certificate.
We instantiate this with an encoder based on decision trees, leading to an efficient end-to-end FRL method, producing fair representations with practical fairness certificates.

Concretely, FARE starts from a given set of data samples $\{\vx^{(1)}, \ldots, \vx^{(n)}\}$ (left in \cref{fig:accept}), partitions the input space into $k$ \emph{cells} (middle plane, here $k=3$) using the decision tree encoder, and maps all samples from the same cell $i$ into the same representation $\vz_i$ (right).
As usual in FRL, training classifiers on $\vz_i$ improves fairness at the slight cost of accuracy.
However, the key advantage of FARE is that using a restricted encoder allows us to estimate the distribution of sensitive groups in each cell---namely, we empirically estimate $P(s=0 | \vz_i)$ and $P(s=1 | \vz_i)$ (solid color orange bars) for all $\vz_i$, and further upper-bound those values with high probability (transparent bars).
This in turn leads to the key feature of FARE: a certificate $T^\star$, \ie a tight upper bound on the unfairness of any $g \in \mathcal{G}$, where $\mathcal{G}$ is the set of all downstream classifiers that can be trained on $\vz_i$.

In experiments on several real datasets, we demonstrate that FARE certificates are relatively tight and thus {practical} (as defined above and elaborated on in later sections).
Further, we show that downstream classifiers trained on FARE representations can achieve empirical accuracy-fairness tradeoffs that are comparable to those of methods from prior work.

We believe our work represents a major step towards solving the important problem of preventing the deployment of discriminatory machine learning models in a provable way.

\paragraph{Main contributions} Our key contributions are:
\begin{itemize}
    \item A practical statistical procedure that, for a restricted encoder, produces a \emph{practical certificate} (see \cref{sec:requirements}), upper-bounding the unfairness of any downstream classifier trained on its representations (\cref{sec:proof}).
    \item An end-to-end FRL method, \tool, that instantiates this approach with a fair decision tree encoder (\cref{sec:tree}), applying our statistical procedure to augment the representations with a practical certificate. The implementation of \tool is publicly available at \url{https://github.com/eth-sri/fare}.
    \item An extensive experimental evaluation in several settings, demonstrating favorable empirical fairness results, as well as practical certificates, which were out of reach for prior work. Interestingly, FARE certificates are often comparable to purely empirical results of existing FRL methods (\cref{sec:experiments}).
\end{itemize}

\section{Related Work} \label{sec:related}
We discuss related work on FRL for group fairness, provable fairness in orthogonal settings, and fair decision trees.

\paragraph{FRL for group fairness}
Following~\citet{LFR} which originally introduced FRL, a plethora of different methods have been proposed based on optimization~\citep{optimized-preprocess, fairpath}, adversarial training~\citep{censoring, adv-feat-learn, LAFTR, MIFR, feng2019adv, MaxEnt-ARL, fair-universal, sIPM}, variational approaches~\citep{VFAE, CVIB, FarconVAE-G, FairDisCo}, disentanglement~\citep{orthogonal-disentangled}, mutual information~\citep{FCRL, SoFair}, and normalizing flows~\citep{FNF, FairNF}. 
The key issue is that most of these methods produce representations with no fairness certificate, meaning that a model trained on their representations could have much worse fairness.
In fact, prior work~\citep{elazar2018removal, xu2020theory, FCRL, gitiaux} has shown that adversarial training methods often significantly overestimate the fairness of their representations. 
We analyze FRL works attempting to provide guarantees in detail in \cref{sec:requirements}.

\paragraph{Provable fairness in other settings}
Numerous related works on provable fairness operate in a different setting than ours---we investigate group fairness via FRL and scenarios where separation of responsibility is crucial, and following prior work, focus on other FRL methods in our evaluation.
Several FRL methods have proposed approaches for learning individually fair representations~\citep{iFair, LCIFR, LASSI}, a different notion of fairness than group fairness.
Prior work has also examined fairness guarantees in problem settings such as ranking~\citep{FairRank}, distribution shifting~\citep{kang2022CertDistFair, jin2022CertDistFair}, in-processing~\citep{feldman2015disparate, donini2018erm, celis19meta, Profitt}, post-processing~\citep{PostprocessIndFair,Fairee}, and meta-learning~\citep{FairMeta}.
 
\paragraph{Decision trees for fairness}
The line of work focusing on adapting decision trees to fairness concerns includes a wide range of methods which differ mainly in the branching criterion. Common choices include variations of Gini impurity~\citep{Kamiran, FairForests, Faht}, mixed-integer programming~\citep{OptimalFairTrees, SynthesisFairTree} or AUC~\citep{Barata}, while some apply adversarial training~\citep{AdvDecTree, FairTrainingIndFair}.
Further, some works operate in a different setting such as online learning~\citep{Faht} or post-processing~\citep{Eifffel}.
The only works in this area that offer provable fairness guarantees are~\citet{FairTrainingIndFair}, which certifies individual fairness for post-processing, ~\citet{CertBiasTrees}, which certifies that predictions will not be affected by data changes, and~\citet{Profitt}, who propose an in-processing method for training fair trees with zero-knowledge proofs. These fundamentally differ from our FRL setting where the goal is to certify fairness of any downstream classifier.

\section{Background} \label{sec:background}

We set up the notation and background on FRL, and recall some key results from prior work which we build upon.

\paragraph{Fair classification} 
Assume tuples \mbox{$(\vx, s) \in \mathbb{R}^d \times \{0,1\}$} from distribution $\xdi$, where each datapoint belongs to a group with respect to a sensitive attribute $s$. 
We focus on binary classification, \ie given $y \in \{0,1\}$ for each datapoint, the goal is to build a model $g \colon \mathbb{R}^d \to \{0, 1\}$ that predicts $y$ from $\vx$. 
Besides maximizing accuracy of $g$, we aim to maximize its fairness with respect to $s$ according to some definition, often for a slight accuracy cost.
While we consider binary $s$ and $y$, our results can be easily extended to other settings (see~\cref{app:ssec:multi}).

\paragraph{Fair representation learning}
In FRL, a \emph{data producer} applies an encoder \mbox{$f \colon \mathbb{R}^d \to \mathbb{R}^{d'}$} to obtain a \emph{representation} $\vz = f(\vx)$ of each datapoint, inducing a joint distribution $\zdi$ of $(\vz, s)$. 
The representations are then published and used by various \emph{data consumers}, who train downstream classifiers $g$ to predict some $y$ from $\vz$, \ie now we have $g \colon \mathbb{R}^{d'} \to \{0,1\}$. 
The key advantage of FRL is that by ensuring fairness properties of $\vz$, we can limit the unfairness of \emph{any} $g$ trained on data from $\zdi$, for any number of consumers.
Namely, there is a separation of responsibility: data producers need to ensure fairness, such that even fairness-agnostic or adversarial consumers are unable to discriminate.

\paragraph{Fairness metrics} 
Let $\zdi_0$ and $\zdi_1$ denote conditional distributions of $\vz$ where $s=0$ and $s=1$, respectively. 
We aim to minimize the \emph{demographic parity (DP) distance} (\ie \emph{statistical parity}) of $g$:
\begin{equation}
    \Delta^{DP}_{\zdi_0, \zdi_1}(g) := \left|\E_{\vz \sim \zdi_0}[g(\vz)] - \E_{\vz \sim \zdi_1}[g(\vz)]\right|.
    \nonumber
\end{equation}  
This metric choice is motivated by prior work and enables comparison with a wide range of baselines, as many consider only DP distance~\cite{CVIB,FCRL,sIPM}---other definitions may be more suitable for particular use-cases \citep{eqodds}, and our method can be adapted to support them.
For instance, equalized odds and equal opportunity correspond to DP on $\zdi_0$ and $\zdi_1$ conditioned on the target label $y$, enabling application of our statistical procedure given in \cref{sec:proof}---see~\cref{app:ssec:metrics} for more details and experimental results.

\paragraph{Towards provable FRL}
The goal of \emph{provable FRL} is for the data producer to provide a \emph{fairness certificate} $T^\star \in \mathbb{R}$, guaranteeing that the DP distance of \emph{any} classifier trained by \emph{any} data consumer on representations from $\mathcal{Z}$ is not higher than $T^\star$.  
We now recall results from prior work, which we also utilize as a first step towards provable FRL.

Consider \mbox{$h \colon \mathbb{R}^{d'} \to \{0,1\}$}, the adversary predicting group membership $s$, aiming to maximize its balanced accuracy:
\begin{align} \label{eq:bacc}
    &{BA}_{\zdi_0, \zdi_1}(h):= \nonumber \\ & \frac{1}{2} \left( \E_{\vz \sim \mathcal{Z}_0}[1 - h(\vz)] + \E_{\vz \sim \mathcal{Z}_1}[h(\vz)] \right).
\end{align}
Let $h^\star$, such that for all $h$, $BA_{\zdi_0, \zdi_1}(h^\star) \geq BA_{\zdi_0, \zdi_1}(h)$, denote the \emph{optimal adversary}. 
Intuitively, $h^\star$ predicts $s$ for which the likelihood of $\vz$ under the corresponding distribution ($\zdi_0$ or $\zdi_1$) is larger. 
More formally, $h^\star(\vz) = \indicator{P(\vz | s=1) \geq P(\vz | s=0)}$ (see \eg \citet{FNF} for a proof).
As shown in \citet{feldman2015disparate, McNamara2017Provably, LAFTR},
\begin{equation} \label{eq:dp2ba}
    \Delta^{DP}_{\zdi_0, \zdi_1}(g) \leq 2 \cdot BA_{\zdi_0, \zdi_1}(h^\star) - 1
\end{equation} 
holds for any $g$, \ie $BA_{\zdi_0, \zdi_1}(h^\star)$ represents a fairness certificate.
Another similar upper bound noted by \citet{LAFTR,CondLFR,Shen2021FRL} uses the total variation distance $d_{TV}(\zdi_0, \zdi_1)$ \citep{LevinPeresWilmer2006} of distributions, as it closely relates to $h^\star$, namely $BA_{\zdi_0, \zdi_1}(h^\star) = 1/2 + 1/2 \cdot d_{TV}(\zdi_0, \zdi_1)$.
Utilizing these results to obtain certificates is not simple, as accurately approximating $h^\star$ or $d_{TV}(\zdi_0, \zdi_1)$ (or related distance measures from the family of IPM or $\phi$-divergences) is hard~\citep{tvd2,tvd3,tvd1}.

\section{FRL with Practical Certificates} \label{sec:requirements}

We now list requirements that a practical certificate should satisfy, and reflect on prior attempts to achieve this goal.
 
\paragraph{Practical certificates}
We start by stating an informal definition of a \emph{practical DP distance certificate}: 
\begin{definition}[Informal]
    \label{def:practicalcert}
    Given small $\epsilon$, finite dataset $D=\{(\vx^{(j)}, s^{(j)})\}_{j=1}^n$ sampled from an unknown distribution $\mathcal{X}$, and an encoder $f$ mapping each $\vx^{(j)}$ into a corresponding representation $\vz^{(j)}$, a \emph{practical DP distance certificate} is a value $\cert \in \mathbb{R}$ such that $$\sup_{g \in \mathcal{G}} \Delta^{DP}_{\zdi_0, \zdi_1}(g) \leq \cert$$ holds with probability $1 - \epsilon$ (over the data sampling), where $\mathcal{G}$ is the set of all downstream classifiers. 
    In realistic use-cases, computing $\cert$ must lead to non-vacuous bounds, \ie $\cert < 1$.
\end{definition}

This definition naturally arises from the core principles of FRL, as discussed in \cref{sec:intro}, and is necessary to provide suitable assurances to involved parties.
We next formalize the requirements implicitly stated in~\cref{def:practicalcert} (denoted R1-R5), and discuss prior work, which has so far been unable to satisfy all requirements (see~\cref{app:survey} for a more detailed exposition). 
Further, we make the case that R1-R5 should be used as a first step (a \emph{sanity check}) in evaluating future attempts to obtain practical provable FRL.

We remark that prior work, while not solving the problem of practical certificates for FRL which we focus on, establishes useful theoretical results, proposes methods with good accuracy-fairness tradeoffs, or provides weaker assurances which may be sufficient for different use-cases than ours.

\paragraph{Nature of bounds}
A practical certificate should define a \emph{high-probability bound (R1)} that holds for a \emph{concrete sample of datapoints} $D$ collected by the data producer.
This is in contrast to expectation bounds (such as \eg~\citet{gitiaux}), which imply bounded unfairness of downstream classifiers only in expectation (with respect to data sampling), and do not provide \emph{any} assurance for a particular sample $D$ in the practical case we observe.

The certificate should also be a \emph{finite-sample bound (R2)}, as opposed to an asymptotic bound, which would hold only for $n \to \infty$, providing no strict assurance in a concrete case of fixed and limited $n$.
An example of this is a large set of works (\eg \citet{Shen2021FRL,FCRL,McNamara2017Provably,LAFTR}), where DP distance is soundly upper-bounded by a quantity not directly computable in practice (\eg $BA_{\zdi_0, \zdi_1}(h^\star)$), for which a more feasible approximation (\eg a 2-layer neural network modeling $h$) is then optimized in training (approximately, using SGD which generally has no guarantees)~\citep{McNamara2017Provably,LAFTR}.
As the quality of approximations is either not discussed or is of the asymptotic nature, these works can not offer practical certificates.

\paragraph{Limiting assumptions}
A practical certificate should be \emph{distribution-free (R3)}, not requiring any assumptions on the distribution $\mathcal{X}$, as otherwise the certificate may not hold for $\mathcal{X}$ which the data producer encounters in practice.
For instance, the certificates of \citet{fair-universal} are valid only if $\mathcal{X}$ is a mixture of Gaussians.
Similarly,~\citet{FNF} provide finite sample bounds that hold given high-confidence density estimation of $\mathcal{X}$, which is possible only under restrictive assumptions such as Lipschitz~\citep{kde1} or $\alpha$-Hölder continuity~\citep{kde2}. %

A practical certificate should also be \emph{model-free (R4)}, with no assumptions on the hypothesis class of the downstream classifiers $\mathcal{G}$. Having such assumptions overestimates fairness (see~\cref{sec:related}) and offers \emph{no protection} against data consumers using a model outside this family, an aspect which the data producer is by design unable to control.
For instance,~\citet{ObliviousKernels,KernelLFR} focus on providing certificates only for the family of Kernel models, which is an unrealistic restriction in our setting.

\paragraph{Empirical validation}
Finally, \cref{def:practicalcert} notes that a practical certificate should be \emph{empirically tested (R5)}, \ie it is important to demonstrate that for a realistic dataset and encoder, the certification procedure can return a real value $T^\star$, which is non-vacuous, \ie at most $1$, the maximum possible value of $\Delta^{DP}_{\zdi_0, \zdi_1}(g)$.
Unfortunately, most prior work (\citet{McNamara2017Provably,LAFTR}, etc.) violates R5, often producing certificates whose actual value can not be exactly computed in practice.
We strongly argue that a realistic empirical study is crucial to demonstrate the practical value of a proposed certification method.

The statistical procedure we will propose in~\cref{sec:proof} satisfies R1-R5: it produces $T^\star$, a high-probability finite-sample bound on $\Delta^{DP}_{\zdi_0, \zdi_1}(g)$, with no restrictive assumptions. We apply the proposed procedure in~\cref{sec:experiments} to several real datasets, obtaining decidedly non-vacuous values of $T^\star$, which are in some cases lower than the purely empirical unfairness values of some prior methods. 
\section{Deriving Practical Certificates for Restricted Encoders} \label{sec:proof}
   
We describe our key contribution, the derivation of practical fairness certificates for restricted encoders (formally defined shortly). In \cref{sec:tree} we instantiate this approach with a particular encoder based on fair decision trees.

Let $p_0$ and $p_1$ denote the PDFs of $\zdi_0$ and $\zdi_1$ respectively, \ie $p_0(\vz_i) = P(\vz_i | s=0)$ and $p_1(\vz_i) = P(\vz_i | s=1)$ and $p$ denote the PDF of the marginal distribution of $\vz$. Similarly, we use $q$ for the marginal distribution of $s$, and $q_i$ for the conditional distribution of $s$ for $\vz = z_i$, \ie $q_i(0) = P(s=0 | \vz = z_i)$ and $q_i(1) = P(s=1 | \vz = z_i)$.

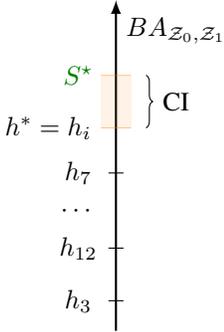
\begin{wrapfigure}{L}{0.17\textwidth}
    \vspace{1em}
    \centering 
    \begin{tikzpicture}

  \def\A{-1.8};
  \def\B{-1.1};
  \def\C{-0.1};
  \def\D{0.5};

  \def\P{0.7}; 
  \def\CI{0.7}; 

  \draw[-latex, thick] (0, -2.2) -- (0, 2.2);

  \node at (-0.5, \A) [] {$h_3$};
  \draw (-0.1, \A) -- (0.1, \A);

  \node at (-0.5, \B) [] {$h_{12}$};
  \draw (-0.1, \B) -- (0.1, \B);

  \node at (-0.5, 0.5*\B + 0.5*\C) [] {$\cdots$};
  
  \node at (-0.5, \C) [] {$h_7$};
  \draw (-0.1, \C) -- (0.1, \C);

  \node at (-0.9, \D) [] {$h^* = h_i$};

  \definecolor{threshgreen}{HTML}{008000}
  \node at (-0.5, \D + \CI) [] {$\textcolor{threshgreen}{S^\star}$};

  \node at (0.8, 1.8) [] {${BA}_{\zdi_0, \zdi_1}$};

  \path [fill=orange, opacity=0.1] (-0.2, \D) rectangle (0.2, \D + \CI);
  \draw[orange, opacity=0.4] (-0.2, \D) -- (0.2, \D);
  \draw[orange, opacity=0.4] (-0.2, \D + \CI) -- (0.2, \D + \CI);

  \draw [decorate, decoration = {brace,mirror}] (0.4, \D) --  (0.4, \D + \CI);
  \node at (0.8, \D + 0.5*\CI) [] {CI};

\end{tikzpicture}
    \caption{\label{fig:motivation} Restricted encoders enable practical certificates.}
  \end{wrapfigure}

\paragraph{Restricted encoders} 
As noted before, prior work is unable to directly utilize~\cref{eq:dp2ba} to obtain practical certificates, as for neural network encoders it is intractable to compute the densities $p_0(\vz)$ and $p_1(\vz)$ that define the optimal adversary $h^*$.
To remedy this, we propose using \emph{restricted encoders} \mbox{$f \colon \mathbb{R}^d \to \{\vz_1, \ldots, \vz_k\}$}, \ie encoders that map each $\vx$ to one of $k$ possible values (\emph{cells}) $\vz_i \in \mathbb{R}^{d'}$, \ie $\mathcal{Z}$ is a discrete distribution with a finite support.
We note that no prior work uses this concept---\citet{LFR} map data to \emph{prototypes}, but this mapping is probabilistic, thus incompatible with our definition.

As now there is a finite number of possible values for a representation, we can use a set $D$ of $n$ samples from $\zdi$ (obtained by applying $f$ to samples from $\xdi$) to analyze the optimal adversary $h^*$ on each possible $\vz$. Moreover, we can upper-bound its balanced accuracy (RHS of \cref{eq:dp2ba}) on the whole distribution $\zdi$ with some value $S^\star$ with high probability, using confidence intervals (CI, \cref{fig:motivation}). Finally, we can apply \cref{eq:dp2ba} to obtain the certificate \mbox{$\Delta^{DP}_{\zdi_0, \zdi_1}(g) \leq 2S^\star -1 = T^\star$}. 
As in \cref{def:practicalcert}, $T^\star$ has a dependency on $D$ and $n$, which we omit for brevity.
A detailed presentation of our certification procedure follows.

\paragraph{Upper-bounding the balanced accuracy} Starting from \cref{eq:bacc}, we reformulate the balanced accuracy of $h^\star$:
\begin{flalign}
    &{BA}_{\zdi_0, \zdi_1}(h^*) \nonumber \\
              &= \frac{1}{2} \left( \sum\limits_{i=1}^{k} p_0(\vz_i) \cdot [1-h^*(\vz_i)] + \sum\limits_{i=1}^{k} p_1(\vz_i) \cdot [h^*(\vz_i)] \right)   \nonumber \\ 
              &= \frac{1}{2} \left( \sum\limits_{i=1}^{k} \max\left( p_0(\vz_i), p_1(\vz_i) \right) \right) \nonumber \\ 
              &= \sum\limits_{i=1}^{k} p(\vz_i) \cdot \max \left(\underbrace{\frac{1}{2q(0)}}_{\alpha_0} \cdot q_i(0),\underbrace{\frac{1}{2q(1)}}_{\alpha_1} \cdot q_i(1) \right)\text{.} \nonumber
\end{flalign} 
The first line uses the definition of expectation of a discrete RV, the second the definition of $h^\star$, and the final line the two applications of Bayes' rule, namely $p_0(\vz_i) = {q_i(0) p(\vz_i)}/{q(0)}$ and $p_1(\vz_i) = {q_i(1) p(\vz_i)}/{q(1)}$.
To upper bound ${BA}_{\zdi_0, \zdi_1}(h^*)$ with high probability using given $n$ samples, we focus on the final expression above, the prior-weighted (\ie weighted by $p(\vz_i)$) per-cell balanced accuracy (\ie $\max(\alpha_0 q_i(0), \alpha_1 q_i(1))$ for each cell $i$), where we set $\alpha_0 = \frac{1}{2q(0)}$ and $\alpha_1 = \frac{1}{2q(1)}$.

Next, we introduce 3 lemmas, and later combine them to obtain the desired certificate. We use $B(p; v,w)$ to denote the $p$-th quantile of a beta distribution with parameters $v$ and $w$. Note that for \cref{lemma1} we do not use the values $\vz^{(j)}$ in the proof, but still introduce them for consistency.

\begin{lemma}[Bounding base rates] \label{lemma1}
    
Given $n$ independent samples $\{(\vz^{(1)}, s^{(1)}), \ldots, (\vz^{(n)}, s^{(n)})\} \sim \zdi$ and a parameter $\epsilon_{b}$, for $\alpha_0$ and $\alpha_1$ as defined above, it holds that
\begin{flalign}
    \alpha_0 &< \frac{1}{2B(\frac{\epsilon_b}{2}; m, n-m+1)}, \quad\text{and}\quad \nonumber \\ 
    \alpha_1 &< \frac{1}{2(1-B(1-\frac{\epsilon_b}{2}; m+1, n-m))} 
    \nonumber \text{,}
\end{flalign}
with confidence $1-\epsilon_b$, where $m=\sum_{j=1}^{n} \indicator{s^{(j)}=0}$.

\end{lemma}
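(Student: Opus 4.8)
The plan is to recognize that $q(0) = P(s=0)$ is a Bernoulli success probability, and that $m = \sum_{j=1}^n \indicator{s^{(j)}=0}$ is a sum of $n$ i.i.d. Bernoulli$(q(0))$ random variables, hence $m \sim \mathrm{Binomial}(n, q(0))$. The quantities $\alpha_0 = \frac{1}{2q(0)}$ and $\alpha_1 = \frac{1}{2q(1)} = \frac{1}{2(1-q(0))}$ are monotone functions of $q(0)$, so bounding $\alpha_0$ from above amounts to bounding $q(0)$ from below, and bounding $\alpha_1$ from above amounts to bounding $q(0)$ from above (equivalently $q(1)$ from below). The natural tool is the \emph{Clopper--Pearson} exact confidence interval for a binomial proportion, whose endpoints are precisely beta quantiles: the lower Clopper--Pearson endpoint at level $1-\delta$ is $B(\delta; m, n-m+1)$ and the upper endpoint is $B(1-\delta; m+1, n-m)$.

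Concretely, I would first handle $\alpha_0$. By the Clopper--Pearson construction, $\Pr\!\big[q(0) \geq B(\tfrac{\epsilon_b}{2}; m, n-m+1)\big] \geq 1 - \tfrac{\epsilon_b}{2}$; strictly speaking one should be slightly careful about whether the inequality is strict or weak and about the boundary cases $m=0$ and $m=n$ (where one endpoint degenerates), but since the statement only claims a strict inequality with confidence $1-\epsilon_b$ and the beta quantile is strictly positive for $m \geq 1$, this is handled by noting the event of interest is contained in the Clopper--Pearson event. Dividing through, on this event $\alpha_0 = \frac{1}{2q(0)} \leq \frac{1}{2 B(\epsilon_b/2; m, n-m+1)}$, and strictness can be arranged. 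Symmetrically, $n - m \sim \mathrm{Binomial}(n, q(1))$ with $q(1) = 1 - q(0)$, so the upper Clopper--Pearson endpoint gives $\Pr\!\big[q(0) \leq B(1-\tfrac{\epsilon_b}{2}; m+1, n-m)\big] \geq 1 - \tfrac{\epsilon_b}{2}$, which rearranges to $q(1) \geq 1 - B(1-\tfrac{\epsilon_b}{2}; m+1, n-m)$ and hence $\alpha_1 \leq \frac{1}{2(1 - B(1-\epsilon_b/2; m+1, n-m))}$. Finally, a union bound over the two failure events, each of probability at most $\tfrac{\epsilon_b}{2}$, yields that both inequalities hold simultaneously with confidence at least $1 - \epsilon_b$.

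The routine-but-necessary pieces are: (i) correctly invoking the exact coverage guarantee of the Clopper--Pearson interval, including the one-sided versions and the handling of $m \in \{0, n\}$ and the degenerate beta parameters; and (ii) the bookkeeping that converts a lower/upper bound on $q(0)$ into an upper bound on $\alpha_0$ and $\alpha_1$ via the monotonicity of $x \mapsto \frac{1}{2x}$ and $x \mapsto \frac{1}{2(1-x)}$ on $(0,1)$. The main obstacle — or at least the only place where care is genuinely required — is the edge-case analysis at $m=0$ or $m=n$: there, one of the beta quantiles is $0$ or $1$, making the corresponding bound vacuous or ill-defined, and one has to argue either that such cases occur with probability absorbed into $\epsilon_b$, or simply that the claimed (strict) inequality is trivially true or trivially false-with-small-probability in that regime. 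Everything else is a direct application of standard binomial confidence-interval theory plus a union bound.
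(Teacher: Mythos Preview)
Your proposal is correct and follows essentially the same approach as the paper: define Bernoulli indicators $\indicator{s^{(j)}=0}$ with success probability $q(0)$, apply the two-sided Clopper--Pearson interval to get one-sided bounds on $q(0)$ with error $\epsilon_b/2$ each, substitute $q(0)=1-q(1)$ and the definitions of $\alpha_0,\alpha_1$, and combine for total confidence $1-\epsilon_b$. Your treatment is in fact more careful than the paper's (you discuss monotonicity and the degenerate cases $m\in\{0,n\}$, which the paper omits), but the route is the same.
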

\begin{proof}
    We define $n$ independent Bernoulli random variables  $X^{(j)} := \indicator{s^{(j)}=0}$ with same unknown success probability $q(0)$. Using the Clopper-Pearson binomial CI (\citet{CP}, see \cref{app:math}) to estimate the probability $q(0)$ we get 
    \mbox{$P(q(0) \leq B(\frac{\epsilon_b}{2}; m, n-m+1)) \leq \epsilon_b/2$}
    and
    \mbox{$P(q(0) \geq B(1-\frac{\epsilon_b}{2}; m+1, n-m)) \leq \epsilon_b/2$}. 
    Substituting $q(0)=1-q(1)$ in the latter, as well as the definitions of $\alpha_0$ and $\alpha_1$ in both inequalities recovers the inequalities from the lemma statement, which simultaneously hold with confidence $1-\epsilon_b$.
\end{proof}

\begin{lemma}[Bounding balanced accuracy for a cell] \label{lemma2}
    
Given $n$ independent samples $\{(\vz^{(1)}, s^{(1)}), \ldots, (\vz^{(n)}, s^{(n)})\} \sim \zdi$, constants $\bar{\alpha_0}$ and $\bar{\alpha_1}$ such that $\alpha_0 < \bar{\alpha_0}$ and $\alpha_1 < \bar{\alpha_1}$, and a parameter $\epsilon_c$, it holds for each cell $i \in \{1, \ldots, k\}$, with total confidence $1-\epsilon_c$, that
\begin{equation} \label{eq:cell}
    \max(\alpha_0 \cdot q_i(0), \alpha_1 \cdot q_i(1)) \leq t_i,
\end{equation} 
where $t_i$ is the maximum of ${\bar{\alpha_0}}{B(1-\frac{\epsilon_c}{2k}; m_i+1, n_i-m_i)}$ and ${\bar{\alpha_1}}(1-$ $B(\frac{\epsilon_c}{2k}; m_i, n_i-m_i+1))$. Here, $n_i = |Z_i|$, and $m_i = \sum_{j \in Z_i} \indicator{s^{(j)}=0}$, where $Z_i = \{j | \vz^{(j)} = \vz_i\}$.

\end{lemma}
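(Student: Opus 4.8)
The plan is to handle each cell $i$ independently, bounding the two terms $\alpha_0 q_i(0)$ and $\alpha_1 q_i(1)$ separately and then taking a maximum, and finally to pay for all $k$ cells with a union bound. First I would fix a cell $i$ and observe that, conditioned on the sample sizes $n_i = |Z_i|$, the sensitive labels of the $n_i$ points landing in cell $i$ are i.i.d.\ Bernoulli draws: specifically $\indicator{s^{(j)}=0}$ for $j \in Z_i$ has unknown success probability $q_i(0)$, and $m_i$ counts these successes. So $m_i$ is (conditionally) a Binomial$(n_i, q_i(0))$ random variable, which is exactly the setting of the Clopper–Pearson interval. Applying the two one-sided Clopper–Pearson bounds at level $\epsilon_c/(2k)$ each gives
\begin{equation}
    P\!\left(q_i(0) \geq B\!\left(1-\tfrac{\epsilon_c}{2k}; m_i+1, n_i-m_i\right)\right) \leq \tfrac{\epsilon_c}{2k}, \nonumber
\end{equation}
and, since $q_i(1) = 1 - q_i(0)$ so that $m_i$ also counts the $n_i - m_i$ "failures" of a Bernoulli with success probability $q_i(1)$ flipped around,
\begin{equation}
    P\!\left(q_i(1) \geq 1 - B\!\left(\tfrac{\epsilon_c}{2k}; m_i, n_i-m_i+1\right)\right) \leq \tfrac{\epsilon_c}{2k}. \nonumber
\end{equation}

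Next I would combine these with the hypotheses $\alpha_0 < \bar\alpha_0$ and $\alpha_1 < \bar\alpha_1$. On the complement of the first bad event, $\alpha_0 q_i(0) < \bar\alpha_0 \cdot B(1-\tfrac{\epsilon_c}{2k}; m_i+1, n_i-m_i)$, which is the first quantity in the definition of $t_i$; on the complement of the second, $\alpha_1 q_i(1) < \bar\alpha_1(1 - B(\tfrac{\epsilon_c}{2k}; m_i, n_i-m_i+1))$, the second quantity. Hence on the intersection of the two good events, $\max(\alpha_0 q_i(0), \alpha_1 q_i(1)) \leq \max$ of those two quantities $= t_i$, and the probability that this fails for cell $i$ is at most $2 \cdot \epsilon_c/(2k) = \epsilon_c/k$ by a union bound over the two sides. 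A final union bound over all $k$ cells gives total failure probability at most $\epsilon_c$, i.e.\ \cref{eq:cell} holds simultaneously for all $i$ with confidence $1-\epsilon_c$. I would also note the edge cases $m_i \in \{0, n_i\}$ or $n_i = 0$, where the relevant beta quantile is interpreted as $0$ or $1$ in the usual Clopper–Pearson convention, so the bounds degrade gracefully to the trivial $t_i \leq \max(\bar\alpha_0, \bar\alpha_1)$.

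The main subtlety — not really an obstacle but the point that needs care — is the conditioning argument: $n_i$ and the cell assignment are themselves random, so strictly one should argue that, conditioned on the multiset of $\vz$-assignments (equivalently on $(n_1,\dots,n_k)$), the within-cell labels are independent Bernoulli$(q_i(0))$ variables, apply Clopper–Pearson conditionally, and then integrate out, since the conditional failure probability bound $\epsilon_c/(2k)$ holds for every value of $n_i$. A secondary point is that $\alpha_0,\alpha_1$ here are treated as fixed unknown constants (the randomness bounding them lives in \cref{lemma1}), so this lemma is purely a statement about the per-cell conditional probabilities $q_i$; the interaction with the randomness in $\bar\alpha_0,\bar\alpha_1$ is deferred to the later combination step. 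I would keep the proof to roughly: (i) set up the conditional Binomial model per cell, (ii) invoke the two one-sided Clopper–Pearson inequalities from \cref{app:math}, (iii) substitute the $\alpha < \bar\alpha$ bounds, (iv) union bound over the two sides and over $k$ cells.
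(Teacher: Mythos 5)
Your proof is correct and follows essentially the same route as the paper: per-cell Clopper--Pearson bounds on $q_i(0)$ (using the two one-sided quantiles at level $\epsilon_c/(2k)$, which is exactly the two-sided CP interval at level $\epsilon_c/k$ the paper invokes), substitution of $q_i(1)=1-q_i(0)$ and $\alpha_j < \bar\alpha_j$, and a union bound over the $k$ cells. Your remarks on conditioning on the random cell sizes $n_i$ and on the $m_i\in\{0,n_i\}$ / $n_i=0$ edge cases are sound and fill in details the paper's terse proof leaves implicit.
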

\begin{proof}
  As in \cref{lemma1}, for each cell we use the Clopper-Pearson CI to estimate $q_i(0)$ with samples indexed by $Z_i$ and confidence $1 - \epsilon_c/k$. As before, we apply $q_i(0) = 1 - q_i(1)$ to arrive at $k$ inequalities of the form \cref{eq:cell}, which per union bound jointly hold with confidence $1-\epsilon_c$.
\end{proof}

\begin{lemma}[Bounding the sum] \label{lemma3}
Given $n$ independent samples $\{(\vz^{(1)}, s^{(1)}), \ldots, (\vz^{(n)}, s^{(n)})\} \sim \zdi$, where for each $j \in \{1, \ldots, n\}$ we define a function $idx(\vz^{(j)}) = i$ such that $\vz^{(j)} = \vz_i$ (cell index), parameter $\epsilon_s$, and a set of real-valued constants $\{t_1, \ldots, t_k\}$, it holds that 
\begin{flalign*}
    &P\left(\sum_{i=1}^k p(\vz_i) t_i \leq S^\star\right) \geq 1 - \epsilon_s, \quad\text{where~}\quad \\
    &S^\star = \frac{1}{n}\sum_{j=1}^{n} t_{idx(\vz^{(j)})} + (b-a) \sqrt{\frac{-\log \epsilon_s}{2n}},
\end{flalign*}
and we set $a = \min\{t_1, \ldots, t_k\}$ and $b = \max\{t_1, \ldots, t_k\}$.
\end{lemma}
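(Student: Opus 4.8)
The plan is to recognize $\sum_{i=1}^k p(\vz_i) t_i$ as the expectation of a bounded random variable and apply Hoeffding's inequality. Specifically, define the random variable $Y := t_{idx(\vz)}$ where $\vz \sim \zdi$; then $\E[Y] = \sum_{i=1}^k P(\vz = \vz_i)\, t_i = \sum_{i=1}^k p(\vz_i) t_i$, which is exactly the quantity we want to bound. Each of the $n$ samples $\vz^{(j)}$ gives an i.i.d.\ copy $Y^{(j)} := t_{idx(\vz^{(j)})}$, and the empirical mean is $\frac{1}{n}\sum_{j=1}^n t_{idx(\vz^{(j)})}$, the first term in $S^\star$.

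First I would note that each $Y^{(j)}$ takes values in the interval $[a,b]$ with $a = \min_i t_i$ and $b = \max_i t_i$, since $t_{idx(\vz^{(j)})}$ is always one of the $t_i$. This is the boundedness hypothesis needed for Hoeffding's inequality. Then I would apply the one-sided Hoeffding bound: for i.i.d.\ random variables in $[a,b]$,
\begin{equation*}
    P\!\left(\E[Y] - \frac{1}{n}\sum_{j=1}^n Y^{(j)} > u\right) \leq \exp\!\left(\frac{-2nu^2}{(b-a)^2}\right).
\end{equation*}
Setting the right-hand side equal to $\epsilon_s$ and solving for $u$ gives $u = (b-a)\sqrt{\frac{-\log \epsilon_s}{2n}}$. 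Rearranging, with probability at least $1-\epsilon_s$ we have $\E[Y] \leq \frac{1}{n}\sum_{j=1}^n Y^{(j)} + (b-a)\sqrt{\frac{-\log \epsilon_s}{2n}} = S^\star$, which is precisely the claim.

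Since the statement of the lemma treats $\{t_1, \ldots, t_k\}$ as fixed real-valued constants (not themselves random, and in particular not depending on the same sample), the application of Hoeffding is clean and there is no subtlety about coupling the randomness of the $t_i$ with the randomness of the $\vz^{(j)}$. I would be careful to state explicitly that the lemma is conditional on the $t_i$'s — the later combination of the three lemmas (via a union bound over $\epsilon_b$, $\epsilon_c$, $\epsilon_s$) is what handles the fact that in the full pipeline the $t_i$ are themselves estimated. There is essentially no main obstacle here: the only thing to get right is the direction of the inequality (we need an upper bound on $\E[Y]$, hence the lower-tail form of Hoeffding applied to $-Y$, equivalently the bound on $\E[Y]$ exceeding the empirical mean) and the bookkeeping of the constant $(b-a)$ versus $(b-a)^2$ in the exponent when inverting for the confidence level.
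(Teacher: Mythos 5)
Your proof is correct and takes essentially the same approach as the paper: define $X^{(j)} := t_{idx(\vz^{(j)})}$, observe these are i.i.d.\ and bounded in $[a,b]$, apply the one-sided Hoeffding inequality to the gap between the population mean $\sum_i p(\vz_i)t_i$ and the empirical mean, and invert to obtain $S^\star$. Your explicit remark that the $t_i$ must be treated as fixed constants (with the coupling to the earlier estimation steps handled by the union bound over $\epsilon_b, \epsilon_c, \epsilon_s$) matches how the paper applies the three lemmas on disjoint data splits.
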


\begin{proof} For each $j$ let $X^{(j)} := t_{idx(\vz^{(j)})}$ denote a random variable. As for all $j$, $X^{(j)} \in [a, b]$ with probability $1$ and $X^{(j)}$ are independent, we can apply Hoeffding's inequality (\citet{Hoeffding}, restated in \cref{app:math}) to upper-bound the difference between the population mean $\sum_{i=1}^k p(\vz_i) t_i = \E_{\vz \sim \mathcal{Z}} t_{idx(\vz)}$ and its empirical estimate $\frac{1}{n} \sum_{j=1}^{n} X^{(j)}$. Setting the upper bound such that the error is below $\epsilon_s$ directly recovers $S^\star$ and the lemma statement.
\end{proof}

\paragraph{Applying the lemmas} We now describe how we apply the lemmas in practice to
upper-bound $BA_{\zdi_0, \zdi_1}(h^\star)$, and in turn upper-bound $\Delta^{DP}_{\zdi_0, \zdi_1}(g)$ for any downstream classifier $g$.
We assume a standard setting, where a set $D$ of datapoints $\{(\vx^{(j)}, s^{(j)})\}$ from $\xdi$ is split into a training set $D_{train}$, used to train $f$, validation set $D_{val}$, held-out for the upper-bounding procedure (and not used in training of $f$ in any capacity), and a test set $D_{test}$, used to evaluate the empirical accuracy and fairness of downstream classifiers.

After training the encoder and applying it to produce representations  $(\vz^{(j)}, s^{(j)}) \sim \zdi$ for all three data subsets, we aim to derive an upper bound on $\Delta^{DP}_{\zdi_0, \zdi_1}(g)$ for any $g$, that holds with confidence at least $1-\epsilon$, where $\epsilon$ is the hyperparameter of the procedure (we use $\epsilon=0.05$).
To this end, we heuristically choose some decomposition $\epsilon = \epsilon_b + \epsilon_c + \epsilon_s$, and apply \cref{lemma1} on $D_{train}$ to obtain upper bounds $\alpha_0 < \bar{\alpha_0}$ and $\alpha_1 < \bar{\alpha_1}$ with error probability $\epsilon_b$. As mentioned above, using $D_{train}$ in this step is sound as estimated probabilities $q(0)$ and $q(1)$ are independent of the encoder $f$. Next, we use $\bar{\alpha_0}$, $\bar{\alpha_1}$ and $D_{val}$ in \cref{lemma2}, to obtain upper bounds $t_1, \ldots, t_k$ on per-cell accuracy that jointly hold with error probability $\epsilon_c$. Finally, we upper-bound the sum $\sum_{i=1}^k p(\vz_i) t_i \leq S^\star$ with error probability $\epsilon_s$ using \cref{lemma3} on $D_{test}$ with previously computed $t_1, \ldots, t_k$. Combining this with \cref{eq:dp2ba} finally gives the certificate
\begin{equation}
    \Delta^{DP}_{\zdi_0, \zdi_1}(g) \leq 2 \cdot BA_{\zdi_0, \zdi_1}(h^\star) - 1 \leq 2S^\star-1 = T^\star,
\end{equation}
which per union bound holds with desired error probability $\epsilon$, with respect to the  sampling process.

\paragraph{Practicality of the certificate} 
According to requirements in~\cref{sec:requirements}, our certificate is practical, matching~\cref{def:practicalcert}. 
We provided a high-probability (R1) finite-sample (R2) bound, with no restrictive assumptions on $\mathcal{X}$ (R3) or the family of downstream classifiers (R4). 
Our procedure can be applied to \emph{any} restricted $f$---as the certificate is derived for a fixed $f$, this is not an obstacle to certificate practicality, like R3 and R4.
We further hypothesize that for a suitable instantiation of restricted $f$ (\ie the one we introduce in \cref{sec:tree}, based on decision trees) the accuracy-fairness tradeoffs obtained are favorable, and that our procedure leads to non-vacuous certificates on real datasets.
We confirm this in our extensive experiments in~\cref{sec:experiments} (satisfying R5).
\section{Fair Decision Trees as Restricted Encoders} \label{sec:tree}
In this section, we instantiate a restricted encoder which allows for favorable accuracy-fairness tradeoffs along with practical certificates.
We use decision trees, motivated by strong results of tree-based models on tabular data, their efficiency and interpretability \citep{tabular, treesnew}, real-life uses (\eg in finance~\citep{treez2,treez1}), and their feature splitting procedure, which allows control over the tightness of our certificate.

In~\cref{app:kmeans} we present an experiment with another class of restricted encoders based on k-means clustering.
This substantiates our claim that our procedure can directly produce practical certificates for \emph{any} restricted $f$, and illustrates that choosing a suitable restricted $f$ is hard, as fairness-unaware encoders (\eg k-means) likely lead to unfavorable results.
 
\paragraph{Classification trees}
Starting from the training set $D_{root}$ of examples $(\vx, y) \in \mathbb{R}^d \times \{0,1\}$, a binary \emph{classification tree} $f$ repeatedly \emph{splits} some leaf node $P$ with assigned $D_P$, \ie picks a split feature $j \in \{1, \ldots, d\}$
and a split threshold $v$, and adds two nodes $L$ and $R$ as children of $P$, such that \mbox{$D_L = \{(\vx, y) \in D_P ~|~ x_j \leq v \}$} and $D_R = D_P \setminus D_L$. $j$ and $v$ are picked to minimize a chosen criterion, weighted by $|D_L|$ and $|D_R|$, aiming to produce \mbox{$y$-homogeneous} leaves. We focus on Gini impurity, computed as $Gini_y(D) = 2p_y(1-p_y) \in [0, 0.5]$ where $p_y = \sum_{(\vx, y) \in D} \indicator{y = 1} / |D|$.
At test time, an example $\vx$ is propagated to a leaf $l$, predicting the majority class of $D_l$.

\begin{figure*}[t]
    \newcommand{\relSubfigWidth}{0.42\textwidth}
    \newcommand{\innerWidth}{\textwidth}
    \centering
    \hfill
    \begin{subfigure}{\relSubfigWidth}
        \centering 
        \includegraphics[width=\innerWidth]{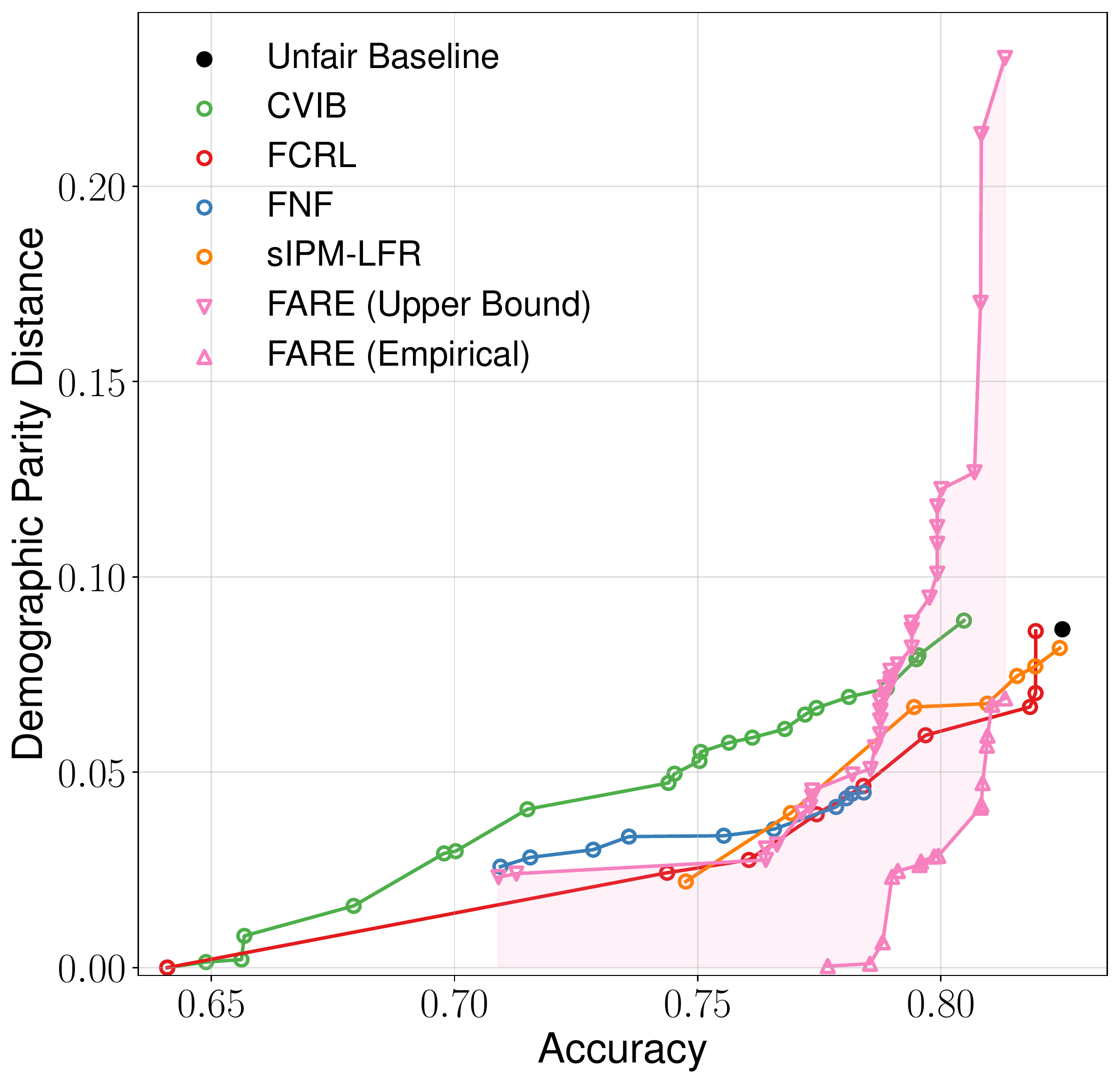}
    \end{subfigure}
    \hfill
    \hspace{1.8em}
    \begin{subfigure}{\relSubfigWidth}
      \centering
      \includegraphics[width=\innerWidth]{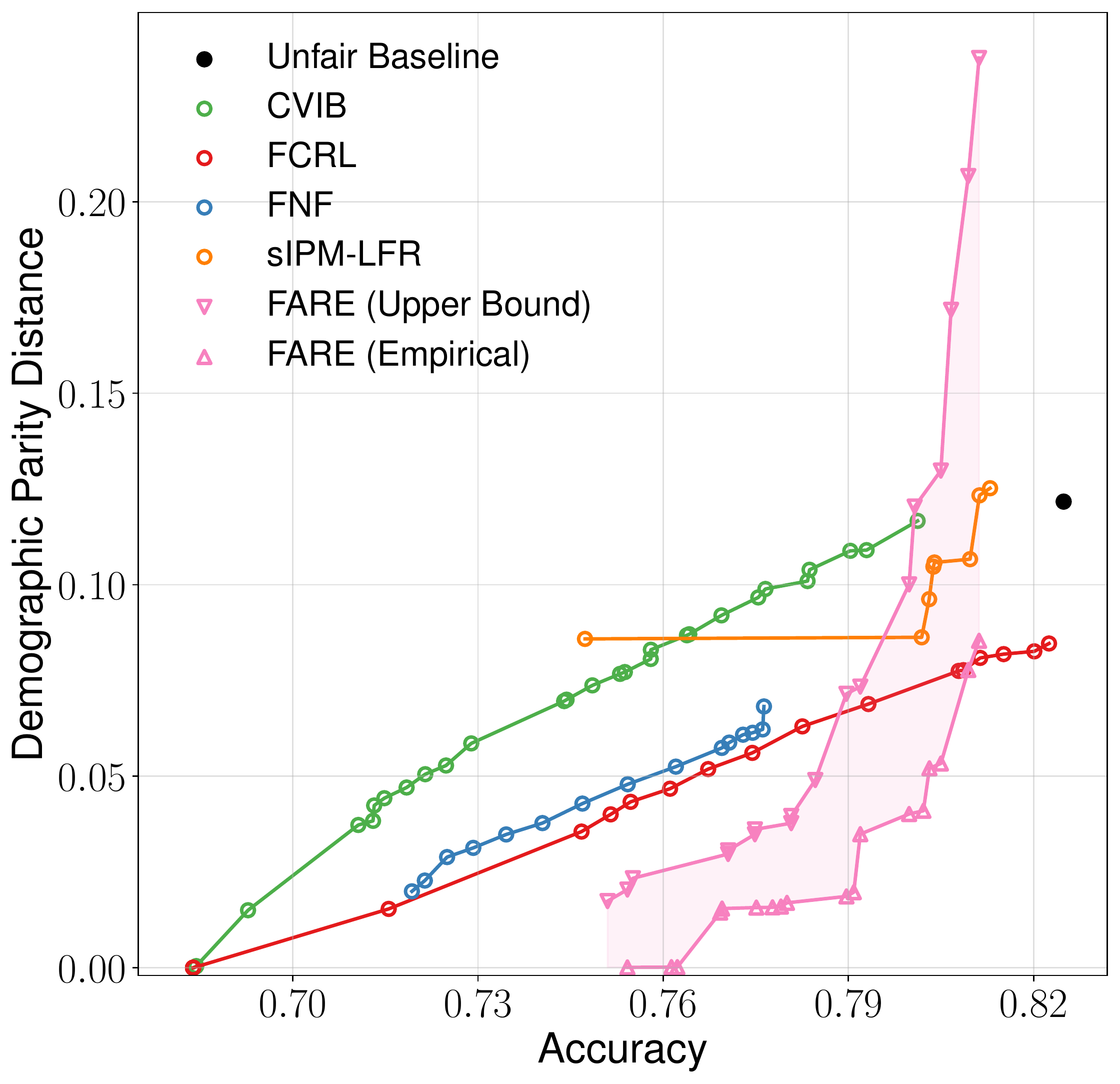}
    \end{subfigure}
    \hspace{1.5em}
    \hfill
    \caption{Evaluation of fair representation learning methods on ACSIncome-CA (left) and ACSIncome-US (right) datasets.}
    \label{fig:MainACS}
\end{figure*} 

\begin{figure}[t]
  \includegraphics[width=0.42\textwidth]{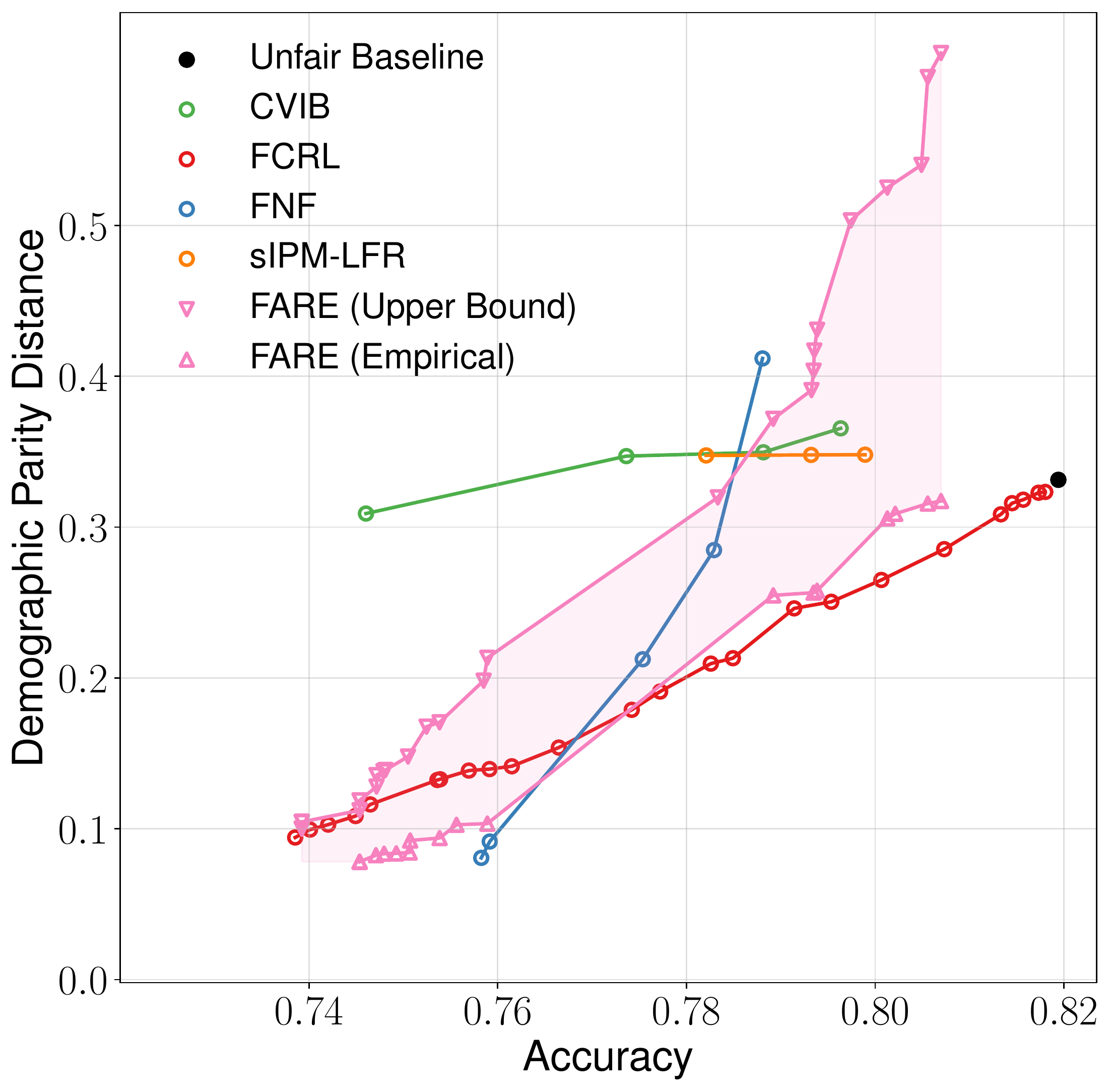}
  \centering
  \caption{Evaluation of FRL methods on the Health dataset.}
  \label{fig:MainHealth}
\end{figure} 
\paragraph{Trees as encoders}
Our key idea is to train a classification tree $f$ with $k$ leaves, encoding all examples in leaf $i$ to the same $\vz_i$. We construct $\vz_i$ based on all training examples in leaf $i$, taking the median for continuous, and the most common value for categorical features (so we have $d'=d$).

\paragraph{Fairness-aware criterion} Criteria such as $Gini_y(D)$ aim to maximize the accuracy by making the distribution of the label $y$ in each leaf highly unbalanced. Using such a tree as an encoder leads to high unfairness, making it necessary to introduce a way to prioritize more fair tree structures.
 
We use, similar to \citet{Kamiran} and others (see \cref{sec:related}), the criterion $FairGini(D) = (1-\gamma) Gini_y(D) + \gamma (0.5 - Gini_s(D)) \in [0, 0.5]$. The second term aims to \emph{maximize} $Gini_s(D)$, \ie make the distribution of $s$ in each leaf $i$ as close to uniform (making it hard to infer $s$ from $\vz_i$), while $\gamma$ controls the accuracy-fairness tradeoff.

\paragraph{Fairness-aware categorical splits} While usual splits of the form $x_j \leq v$ are suitable for continuous, they are inefficient for categorical (usually one-hot) variables, as only $1$ category can be isolated. This increases the number of cells, making our certificates loose. Instead, we represent $n_j$ categories for feature $j$ as integers $ c\in \{1, 2, ..., n_j\}$. To avoid evaluating all \mbox{$2^{n_j}-1$} possible partitions, we sort the values by $p_y(c) = \sum_{(\vx, y) \in D_c} \indicator{y = 1} / |D_c|$ where we set \mbox{$D_c = \{\vx \in D~|~ \vx_j=c\}$}, and consider all prefix-suffix partitions (\emph{Breiman shortcut}).

This ordering focuses on accuracy and is provably optimal for $FairGini(D)$ with $\gamma=0$ \citep{Breiman}. However, as it ignores fairness, it is inefficient for $\gamma > 0$. To alleviate this, we generalize the Breiman shortcut, and explore all prefix-suffix partitions under several orderings. Namely, for several values of the parameter $q$, we split the set of categories $\{1, 2, \ldots, n_j\}$ in \mbox{$q$-quantiles} with respect to $p_s(c)$ (defined analogous to $p_y(c)$), and sort each quantile by $p_y(c)$ as before, interspersing $q$ obtained arrays to obtain the final ordering. While this offers no optimality guarantees, it is an efficient way to consider both objectives.

\section{Experimental Evaluation} \label{sec:experiments}
We evaluate FARE on several datasets, showing that it produces representations with fairness-accuracy tradeoffs comparable to prior work, while for the first time offering practical certificates.
We then present several additional studies.

\paragraph{Experimental setup}

We consider common fairness datasets: Health~\citep{health}, ACSIncome-CA (only California), and ACSIncome-US (US-wide)~\citep{ding2021retiring}. The sensitive attributes are age and sex, respectively.
We include the following FRL baselines: LAFTR~\citep{LAFTR}, CVIB~\citep{CVIB}, FCRL~\citep{FCRL}, FNF~\citep{FNF}, sIPM~\citep{sIPM}, and FairPath~\citep{fairpath}. 
All omitted details regarding our experiments are given in~\cref{app:expdetails}.

\paragraph{Main experiments} We explore the fairness-accuracy tradeoff of each method by running it with various hyperparameters to obtain different representations, further used to train a 1-hidden-layer neural network (1-NN, other classifiers explored in \cref{app:moreexp:downstream}) for a certain prediction task, whose DP distance and prediction accuracy are then plotted.
All hyperparameters of FARE are listed in \cref{app:expdetails} and we present an ablation study of the key parameter $\bar{k}$ in~\cref{app:moreexp:ablation}, which can be used alongside our main results to guide parameter choices in practical applications of FARE.
 
We show a test set Pareto front for each method. 
For FARE, we also show a Pareto front of the certificate, \ie a 95\% confidence provable upper bound on DP distance (results with other metrics are deferred to~\cref{app:ssec:metrics} and lead to similar conclusions).
As noted before, no other method provides practical certificates that could be compared to FARE's.  
Finally, we include the Unfair Baseline, \ie an identity encoder.
The results are shown in~\cref{fig:MainACS} and~\cref{fig:MainHealth}. We omit FairPath and LAFTR from the main plots (see extended results in \cref{app:extendedresults}), as LAFTR has stability issues~\citep{FCRL, sIPM}, and FairPath uses a different metric to us~\citep{fairpath}.

FARE achieves a better or comparable accuracy-fairness tradeoff compared to baselines.
Crucially, the baselines cannot guarantee that there is no classifier with a worse DP distance when trained on their representations.
This cannot happen for FARE---we compute a \emph{provable} upper bound on DP distance of \emph{any} such classifier.
Our certificate is often comparable to \emph{empirical} values of baselines.
We note a small gap ($\leq 1.5\%$) between the best accuracy of FARE and the unfair baseline, indicating a tradeoff of restricted encoders---FARE computes a practical certificate, but loses some information, limiting the predictive power of classifiers.
This is rarely a practical issue, as achieving meaningful fairness generally requires a non-trivial accuracy loss, especially when $s$ and $y$ are correlated.
In \cref{app:moreexp:larger} we show that this gap is unaffected by dataset size.
Finally, another important advantage of FARE is efficiency, with runtime of only several seconds, as opposed to minutes or hours for all other baselines (we measure this in~\cref{app:efficiency}).

\paragraph{Data improves bounds} 
In \cref{fig:MainACS} we see that FARE certificates are tighter for ACSIncome-US, suggesting that using more samples improves the bounds.
To investigate this further in a controlled manner, we choose a representative set of FARE points from \cref{fig:MainACS} (left), and repeat the certification procedure with the dataset repeated $M$ times, showing the resulting upper bounds for $M \in \{2,4,8,16,32\}$ in \cref{fig:bigdata}.
We observe a significant improvement in the certificate for larger $M$, further reinforcing our intuition that FARE is well-suited for large datasets and will benefit from ever-increasing amounts of data used in ML~\citep{IncreasingData}. 
Note that the bounds obtained for some $M>1$ are only valid for the corresponding duplicated dataset (where we assume it was directly sampled), and do not hold on the original one, \ie we cannot simply boost bound tightness on the original problem via data duplication. In \cref{app:moreexp:shiftimpute} we study the robustness of FARE to distribution shifts and missing data, and in~\cref{app:moreexp:sens} show that it is robust to sensitive attribute imbalance.

\begin{figure}[t]
    \centering
    \includegraphics[width=0.42\textwidth]{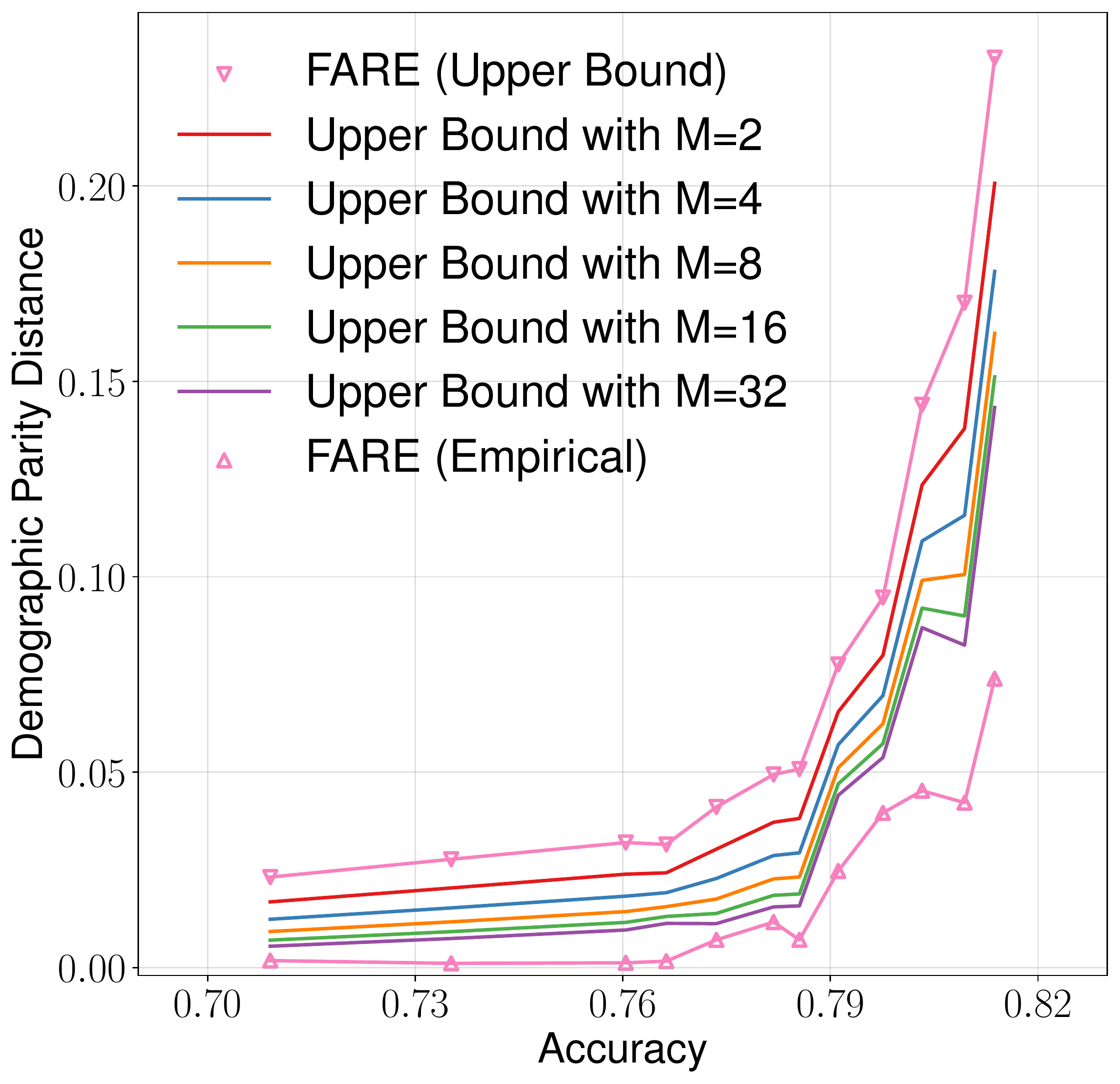}
    \caption{The impact of increasing the dataset size $M$ times on the tightness of the FARE fairness certificate.}
    \label{fig:bigdata}
\end{figure}
\begin{wrapfigure}[17]{L}{0.27\textwidth}
    \centering
    \includegraphics[width=0.27\textwidth]{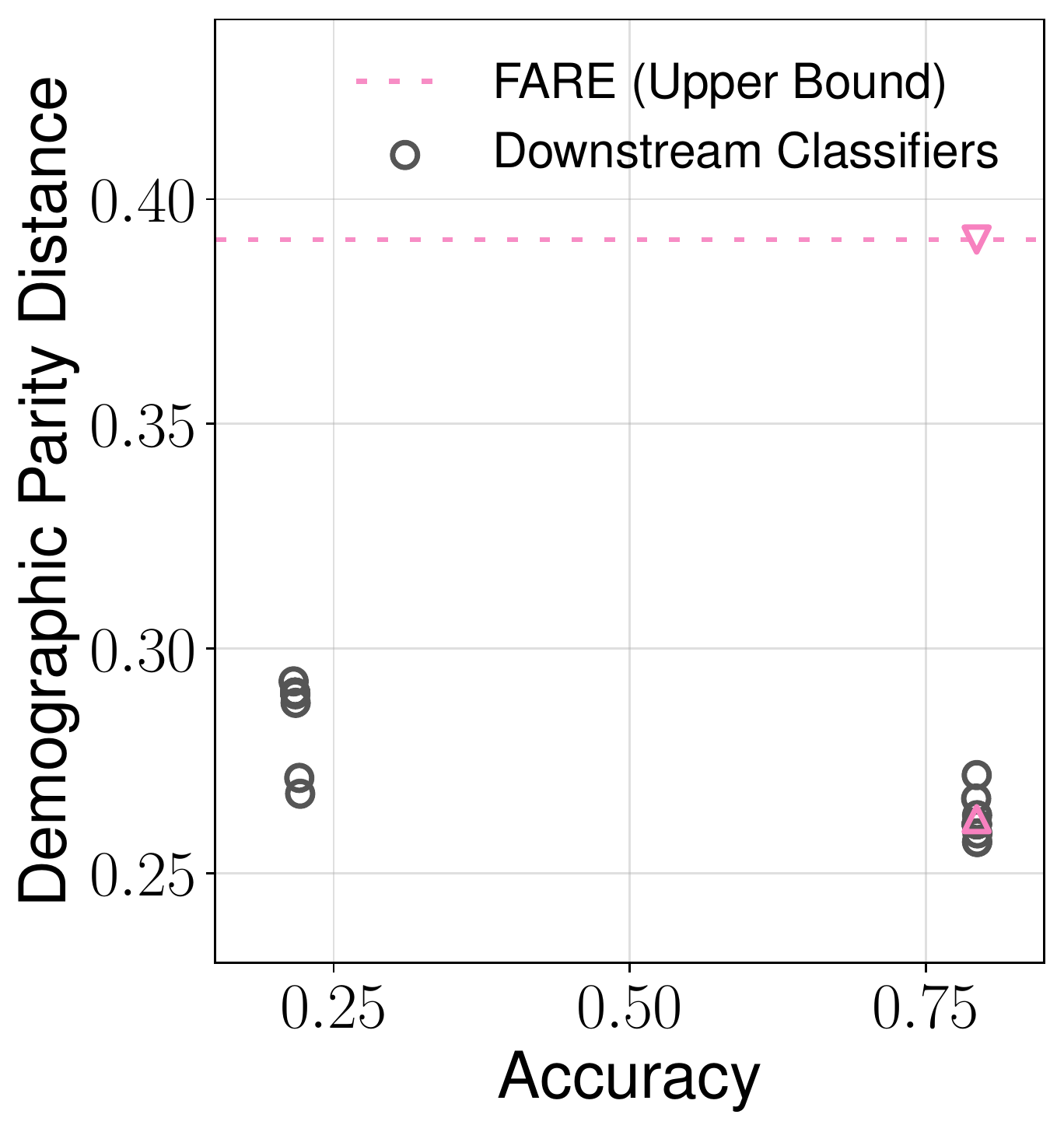}
    \caption{Downstream classifiers are below the FARE certificate.}
   \label{fig:downstream_main}
\end{wrapfigure} 

\paragraph{Certificate validity}
In the next experiment we investigate a representative point obtained by FARE from \cref{fig:MainHealth} with accuracy 79.3\%, empirical DP of 26.2\% and DP upper bound of 39.1\%.
In \cref{fig:downstream_main}, we show this point together with 24 other downstream classifiers (see \cref{app:expdetails}) obtained by training diverse model classes on the same representations (simulating a real use case where each data consumer might prefer to use a different model).
Half of the models are trained to maximize unfairness, and half to maximize accuracy.
We make two observations.
The left cluster shows that the data consumer can train models with high unfairness, especially in the worst case when they intentionally discriminate.
The right cluster reaffirms a known limitation of prior work \citep{xu2020theory,FCRL}: evaluating representations with some model class (triangle shows the 1-NN model from \cref{fig:MainHealth}) can underestimate unfairness, as data consumers using a different model class (\eg SVM) can be more unfair.
Both of these highlight the value of FARE which provides a practical certificate---all unfairness values still remain below a known upper bound.

\paragraph{Interpretability}
Another advantage of FARE is that its tree-based encoder enables direct interpretation of representations.
To illustrate this, we explore a point from \cref{fig:MainACS} (right), with accuracy $75.1\%$ and DP distance of $0.005$, where we have $k=6$.
We can easily find that, for example, the representation $\vz_6$ is assigned to each person older that $24$, with at least a Bachelor's degree, and an occupation in management, business or science.
We note that this illustrative point has a particularly low value of $k$, greatly sacrificing accuracy for fairness---other Pareto front points often use larger values of $k$ (see \cref{app:expdetails}).

\paragraph{Transfer learning}
Finally, we analyze the transferability of learned representations across tasks.
We produce a diverse set of representations on the Health dataset with each method, and following the procedure from prior work~\citep{LAFTR, FNF, sIPM} evaluate them on five unseen tasks $y$ (see \cref{app:expdetails} for details), where for each the goal is to predict a certain primary condition group.
For each task and each method, we identify the highest accuracy obtained while keeping $\Delta^{DP}_{\zdi_0, \zdi_1}$ not above 0.20 (or 0.05).
Moreover, we show $T^\star$, the provable DP distance upper bound of FARE.

The results are shown in~\cref{tab:transfer}.
We observe that some methods are unable to reduce $\Delta^{DP}_{\zdi_0, \zdi_1}$ below the given threshold.
FARE can always reduce $\Delta^{DP}_{\zdi_0, \zdi_1}$ sufficiently, but due to our restriction which enables the practical certificate, we often lose more accuracy for high $\Delta^{DP}_{\zdi_0, \zdi_1}$ thresholds.

\begin{table}
    \centering
    \resizebox{0.41\textwidth}{!}{
    \begin{tabular}{c|c|ccccc}
    \toprule
    $y$ & $\Delta^{DP}_{\zdi_0, \zdi_1}$ & $T^\star$ & FARE & FCRL & FNF & sIPM \\
      \midrule
      \multirow{2}{*}{MIS} & $\leq$ 0.20 & 0.73 & 79.3 & 78.6 & 78.9 & 79.8 \\
       & $\leq$ 0.05 & 0.57 & 78.7 & 78.6 & 78.7 & 78.6 \\
      \midrule
      \multirow{2}{*}{NEU} & $\leq$ 0.20 & 0.72 & 73.2 & 72.4 & 71.9 & 76.6 \\
       & $\leq$ 0.05 & 0.43 & 72.1 & 71.4 & 71.7 & /\\
      \midrule
      \multirow{2}{*}{ART} & $\leq$ 0.20 & 0.55 & 74.4 & 70.7 & 68.9 & 78.3 \\
       & $\leq$ 0.05 & 0.12 & 69.5 & 69.5 & 68.5 & /\\
      \midrule
      \multirow{2}{*}{MET} & $\leq$ 0.20 & 0.48 & 69.8 & 69.2 & 75.0 & /\\
       & $\leq$ 0.05 & 0.12 & 66.1 & 65.3 & /& /\\
      \midrule
      \multirow{2}{*}{MSC} & $\leq$ 0.20 & 0.48 & 67.4 & 70.5 & 73.0 & /\\
       & $\leq$ 0.05 & 0.12 & 63.0 & /& /& /\\
    \bottomrule
    \end{tabular}}
    \caption{The results of transfer learning on Health.}
    \label{tab:transfer}
  \end{table} 
\section{Limitations and Future Work}
Here we reflect on the limitations of FARE and highlight interesting avenues for future work.
While FARE is the first FRL method to provide practical certificates, and its empirical tradeoffs are generally favorable, our results in~\cref{sec:experiments} illustrate two main directions in which its results could be improved: (i) investigating and reducing the gap between the best achievable accuracy and the unfair baseline, (ii) tightening the certificate in high-accuracy regions. 
To this end, future work may attempt to extend the tree-based instantiation, or look for other more suitable instantiations of restricted encoders, analyzing their fundamental tradeoffs.
Further, an important step for future work is the improvement of transfer learning results which are especially relevant in the FRL setting.
Finally, important settings that we briefly study such as multivalued $y$ and $s$ (\cref{app:ssec:multi}) or robustness to distribution shifts (\cref{app:moreexp:shiftimpute}) could be studied in more detail in the context of restricted encoders. 
\section{Conclusion} \label{sec:conclusion}

We introduced FARE, a method for provably fair representation learning with practical certificates. The key idea was that using restricted encoders enables a practical statistical procedure for computing a high probability finite-sample upper bound on the unfairness of any downstream classifier. We instantiated this idea with a tree-based encoder, and experimentally demonstrated that FARE can, for the first time, obtain tight fairness bounds on several datasets, while simultaneously producing empirical fairness-accuracy tradeoffs similar to prior work which offers no practical guarantees. 

\section*{Acknowledgements}
We thank Angéline Pouget and Nikola Konstantinov for helpful feedback on previous versions of this paper.
This work has received funding from the Swiss State Secretariat for Education, Research and Innovation (SERI) (SERI-funded ERC Consolidator Grant).

\vfill 
\clearpage

\bibliography{references}

\begin{thebibliography}{85}
\providecommand{\natexlab}[1]{#1}
\providecommand{\url}[1]{\texttt{#1}}
\expandafter\ifx\csname urlstyle\endcsname\relax
  \providecommand{\doi}[1]{doi: #1}\else
  \providecommand{\doi}{doi: \begingroup \urlstyle{rm}\Url}\fi

\bibitem[Abadi et~al.(2016)Abadi, Chu, Goodfellow, McMahan, Mironov, Talwar,
  and Zhang]{DPSGD}
Abadi, M., Chu, A., Goodfellow, I.~J., McMahan, H.~B., Mironov, I., Talwar, K.,
  and Zhang, L.
\newblock Deep learning with differential privacy.
\newblock In \emph{{CCS}}, pp.\  308--318. {ACM}, 2016.

\bibitem[Abebe et~al.(2022)Abebe, Lucchese, and Orlando]{Eifffel}
Abebe, S.~A., Lucchese, C., and Orlando, S.
\newblock Eifffel: enforcing fairness in forests by flipping leaves.
\newblock In \emph{{SAC}}, 2022.

\bibitem[Aghaei et~al.(2019)Aghaei, Azizi, and Vayanos]{OptimalFairTrees}
Aghaei, S., Azizi, M.~J., and Vayanos, P.
\newblock Learning optimal and fair decision trees for non-discriminative
  decision-making.
\newblock In \emph{{AAAI}}, 2019.

\bibitem[Balunovi\'{c} et~al.(2022)Balunovi\'{c}, Ruoss, and Vechev]{FNF}
Balunovi\'{c}, M., Ruoss, A., and Vechev, M.~T.
\newblock Fair normalizing flows.
\newblock In \emph{ICLR}, 2022.

\bibitem[Barata \& Veenman(2021)Barata and Veenman]{Barata}
Barata, A.~P. and Veenman, C.~J.
\newblock Fair tree learning.
\newblock \emph{CoRR}, 2021.

\bibitem[Barocas \& Selbst(2016)Barocas and Selbst]{barocas2016big}
Barocas, S. and Selbst, A.~D.
\newblock Big data's disparate impact.
\newblock \emph{California Law Review}, 2016.

\bibitem[Benaich \& Hogarth(2021)Benaich and Hogarth]{Benaich21}
Benaich, N. and Hogarth, I.
\newblock State of ai report 2021.
\newblock 2021.
\newblock \url{https://www.stateof.ai/2021}, accessed: 2023-01-16.

\bibitem[Bhattacharyya et~al.(2022)Bhattacharyya, Gayen, Meel, Myrisiotis,
  Pavan, and Vinodchandran]{tvd1}
Bhattacharyya, A., Gayen, S., Meel, K.~S., Myrisiotis, D., Pavan, A., and
  Vinodchandran, N.~V.
\newblock On approximating total variation distance.
\newblock \emph{CoRR}, 2022.

\bibitem[Borisov et~al.(2021)Borisov, Leemann, Se{\ss}ler, Haug, Pawelczyk, and
  Kasneci]{tabular}
Borisov, V., Leemann, T., Se{\ss}ler, K., Haug, J., Pawelczyk, M., and Kasneci,
  G.
\newblock Deep neural networks and tabular data: {A} survey.
\newblock \emph{arXiv}, 2021.

\bibitem[Breiman et~al.(1984)Breiman, Friedman, Olshen, and Stone]{Breiman}
Breiman, L., Friedman, J.~H., Olshen, R.~A., and Stone, C.~J.
\newblock \emph{Classification and Regression Trees}.
\newblock Wadsworth, 1984.

\bibitem[Brennan et~al.(2009)Brennan, Dieterich, and Ehret]{brennan2009compas}
Brennan, T., Dieterich, W., and Ehret, B.
\newblock Evaluating the predictive validity of the compas risk and needs
  assessment system.
\newblock \emph{Criminal Justice and Behavior}, 2009.

\bibitem[Calmon et~al.(2017)Calmon, Wei, Vinzamuri, Ramamurthy, and
  Varshney]{optimized-preprocess}
Calmon, F.~P., Wei, D., Vinzamuri, B., Ramamurthy, K.~N., and Varshney, K.~R.
\newblock Optimized pre-processing for discrimination prevention.
\newblock In \emph{{NeurIPS}}, 2017.

\bibitem[Celis et~al.(2019)Celis, Huang, Keswani, and Vishnoi]{celis19meta}
Celis, L.~E., Huang, L., Keswani, V., and Vishnoi, N.~K.
\newblock Classification with fairness constraints: {A} meta-algorithm with
  provable guarantees.
\newblock In \emph{{FAT}}, 2019.

\bibitem[Cerrato et~al.(2022)Cerrato, K{\"{o}}ppel, Segner, and Kramer]{FairNF}
Cerrato, M., K{\"{o}}ppel, M., Segner, A., and Kramer, S.
\newblock Fair group-shared representations with normalizing flows.
\newblock \emph{CoRR}, 2022.

\bibitem[Chui et~al.(2021)Chui, Hall, Singla, and Sukharevsky]{Chui21}
Chui, M., Hall, B., Singla, A., and Sukharevsky, A.
\newblock The state of ai in 2021.
\newblock 2021.
\newblock
  \url{https://www.mckinsey.com/capabilities/quantumblack/our-insights/global-survey-the-state-of-ai-in-2021},
  accessed: 2023-01-16.

\bibitem[Clopper \& Pearson(1934)Clopper and Pearson]{CP}
Clopper, C.~J. and Pearson, E.~S.
\newblock The use of confidence or fiducial limits illustrated in the case of
  the binomial.
\newblock \emph{Biometrika}, 1934.

\bibitem[Corbett-Davies et~al.(2017)Corbett-Davies, Pierson, Feller, Goel, and
  Huq]{CorbettPFGH17}
Corbett-Davies, S., Pierson, E., Feller, A., Goel, S., and Huq, A.
\newblock Algorithmic decision making and the cost of fairness.
\newblock In \emph{ACM SIGKDD}, 2017.

\bibitem[Devroye et~al.(1996)Devroye, Gy{\"{o}}rfi, and Lugosi]{tvd2}
Devroye, L., Gy{\"{o}}rfi, L., and Lugosi, G.
\newblock \emph{A Probabilistic Theory of Pattern Recognition}.
\newblock 1996.

\bibitem[Ding et~al.(2021)Ding, Hardt, Miller, and Schmidt]{ding2021retiring}
Ding, F., Hardt, M., Miller, J., and Schmidt, L.
\newblock Retiring adult: New datasets for fair machine learning.
\newblock \emph{Advances in Neural Information Processing Systems}, 34, 2021.

\bibitem[Donini et~al.(2018)Donini, Oneto, Ben{-}David, Shawe{-}Taylor, and
  Pontil]{donini2018erm}
Donini, M., Oneto, L., Ben{-}David, S., Shawe{-}Taylor, J., and Pontil, M.
\newblock Empirical risk minimization under fairness constraints.
\newblock In \emph{NeurIPS}, pp.\  2796--2806, 2018.

\bibitem[Dwork et~al.(2012)Dwork, Hardt, Pitassi, Reingold, and
  Zemel]{DworkHPRZ12}
Dwork, C., Hardt, M., Pitassi, T., Reingold, O., and Zemel, R.~S.
\newblock Fairness through awareness.
\newblock In \emph{{ITCS}}, 2012.

\bibitem[Edwards \& Storkey(2016)Edwards and Storkey]{censoring}
Edwards, H. and Storkey, A.~J.
\newblock Censoring representations with an adversary.
\newblock In \emph{{ICLR}}, 2016.

\bibitem[Elazar \& Goldberg(2018)Elazar and Goldberg]{elazar2018removal}
Elazar, Y. and Goldberg, Y.
\newblock Adversarial removal of demographic attributes from text data.
\newblock In \emph{{EMNLP}}, 2018.

\bibitem[EU(2021)]{eu2021proposal}
EU.
\newblock Proposal for a regulation laying down harmonised rules on artificial
  intelligence, 2021.

\bibitem[Feldman et~al.(2015)Feldman, Friedler, Moeller, Scheidegger, and
  Venkatasubramanian]{feldman2015disparate}
Feldman, M., Friedler, S.~A., Moeller, J., Scheidegger, C., and
  Venkatasubramanian, S.
\newblock Certifying and removing disparate impact.
\newblock In \emph{{KDD}}, 2015.

\bibitem[Feng et~al.(2019)Feng, Yang, Lyu, Tan, Sun, and Wang]{feng2019adv}
Feng, R., Yang, Y., Lyu, Y., Tan, C., Sun, Y., and Wang, C.
\newblock Learning fair representations via an adversarial framework.
\newblock \emph{CoRR}, 2019.

\bibitem[FTC(2021)]{ftc2021ai}
FTC.
\newblock Aiming for truth, fairness, and equity in your company’s use of ai,
  2021.
\newblock
  \url{https://www.ftc.gov/news-events/blogs/business-blog/2021/04/aiming-truth-fairness-equity-your-\\companys-use-ai},
  accessed: 2023-01-16.

\bibitem[Gardner et~al.(2022)Gardner, Popovic, and Schmidt]{treesnew}
Gardner, J., Popovic, Z., and Schmidt, L.
\newblock Subgroup robustness grows on trees: An empirical baseline
  investigation.
\newblock \emph{CoRR}, 2022.

\bibitem[Gehr et~al.(2018)Gehr, Mirman, Drachsler{-}Cohen, Tsankov, Chaudhuri,
  and Vechev]{AI2}
Gehr, T., Mirman, M., Drachsler{-}Cohen, D., Tsankov, P., Chaudhuri, S., and
  Vechev, M.~T.
\newblock {AI2:} safety and robustness certification of neural networks with
  abstract interpretation.
\newblock In \emph{{IEEE} Symposium on Security and Privacy}, pp.\  3--18.
  {IEEE} Computer Society, 2018.

\bibitem[Ghatasheh(2014)]{treez2}
Ghatasheh, N.
\newblock Business analytics using random forest trees for credit risk
  prediction: A comparison study.
\newblock \emph{International Journal of Advanced Science and Technology},
  2014.

\bibitem[Gitiaux \& Rangwala(2021)Gitiaux and Rangwala]{gitiaux}
Gitiaux, X. and Rangwala, H.
\newblock Learning smooth and fair representations.
\newblock In \emph{{AISTATS}}, 2021.

\bibitem[Gitiaux \& Rangwala(2022)Gitiaux and Rangwala]{SoFair}
Gitiaux, X. and Rangwala, H.
\newblock Sofair: Single shot fair representation learning.
\newblock In \emph{{IJCAI}}, 2022.

\bibitem[Grari et~al.(2020)Grari, Ruf, Lamprier, and Detyniecki]{AdvDecTree}
Grari, V., Ruf, B., Lamprier, S., and Detyniecki, M.
\newblock Achieving fairness with decision trees: An adversarial approach.
\newblock \emph{Data Sci. Eng.}, 2020.

\bibitem[Gr{\"{u}}new{\"{a}}lder \& Khaleghi(2021)Gr{\"{u}}new{\"{a}}lder and
  Khaleghi]{ObliviousKernels}
Gr{\"{u}}new{\"{a}}lder, S. and Khaleghi, A.
\newblock Oblivious data for fairness with kernels.
\newblock \emph{J. Mach. Learn. Res.}, 22:\penalty0 208:1--208:36, 2021.

\bibitem[Guntay et~al.(2022)Guntay, Bozan, Tigrak, Durdu, and Ozkahya]{treez1}
Guntay, L., Bozan, E., Tigrak, U., Durdu, T., and Ozkahya, G.~E.
\newblock An explainable credit scoring framework: A use case of addressing
  challenges in applied machine learning.
\newblock In \emph{TEMSCON EUROPE}, 2022.

\bibitem[Gupta et~al.(2021)Gupta, Ferber, Dilkina, and Steeg]{FCRL}
Gupta, U., Ferber, A.~M., Dilkina, B., and Steeg, G.~V.
\newblock Controllable guarantees for fair outcomes via contrastive information
  estimation.
\newblock In \emph{{AAAI}}, 2021.

\bibitem[Hardt et~al.(2016)Hardt, Price, and Srebro]{eqodds}
Hardt, M., Price, E., and Srebro, N.
\newblock Equality of opportunity in supervised learning.
\newblock In \emph{NeurIPS}, 2016.

\bibitem[Hoeffding(1963)]{Hoeffding}
Hoeffding, W.
\newblock Probability inequalities for sums of bounded random variables.
\newblock \emph{JSTOR}, \penalty0 (301), 1963.

\bibitem[Jiang(2017)]{kde2}
Jiang, H.
\newblock Uniform convergence rates for kernel density estimation.
\newblock In \emph{{ICML}}, 2017.

\bibitem[Jin et~al.(2022)Jin, Zhang, Zhou, and Wu]{jin2022CertDistFair}
Jin, J., Zhang, Z., Zhou, Y., and Wu, L.
\newblock Input-agnostic certified group fairness via gaussian parameter
  smoothing.
\newblock In \emph{{ICML}}, 2022.

\bibitem[Kaggle(2012)]{health}
Kaggle.
\newblock Health heritage prize, 2012.
\newblock URL \url{https://www.kaggle.com/c/hhp}.

\bibitem[Kairouz et~al.(2022)Kairouz, Liao, Huang, Vyas, Welfert, and
  Sankar]{fair-universal}
Kairouz, P., Liao, J., Huang, C., Vyas, M., Welfert, M., and Sankar, L.
\newblock Generating fair universal representations using adversarial models.
\newblock \emph{{IEEE} TIFS}, 2022.

\bibitem[Kamiran et~al.(2010)Kamiran, Calders, and Pechenizkiy]{Kamiran}
Kamiran, F., Calders, T., and Pechenizkiy, M.
\newblock Discrimination aware decision tree learning.
\newblock In \emph{{ICDM}}, 2010.

\bibitem[Kang et~al.(2022)Kang, Li, Weber, Liu, Zhang, and
  Li]{kang2022CertDistFair}
Kang, M., Li, L., Weber, M., Liu, Y., Zhang, C., and Li, B.
\newblock Certifying some distributional fairness with subpopulation
  decomposition.
\newblock \emph{CoRR}, 2022.

\bibitem[Kim et~al.(2022)Kim, Kim, Kong, Ohn, and Kim]{sIPM}
Kim, D., Kim, K., Kong, I., Ohn, I., and Kim, Y.
\newblock Learning fair representation with a parametric integral probability
  metric.
\newblock In \emph{{ICML}}, 2022.

\bibitem[Kleinberg et~al.(2017{\natexlab{a}})Kleinberg, Mullainathan, and
  Raghavan]{KleinbergMR17}
Kleinberg, J., Mullainathan, S., and Raghavan, M.
\newblock Inherent trade-offs in the fair determination of risk scores.
\newblock In \emph{ITCS}, 2017{\natexlab{a}}.

\bibitem[Kleinberg et~al.(2017{\natexlab{b}})Kleinberg, Mullainathan, and
  Raghavan]{kleinberg}
Kleinberg, J.~M., Mullainathan, S., and Raghavan, M.
\newblock Inherent trade-offs in the fair determination of risk scores.
\newblock In \emph{{ITCS}}, 2017{\natexlab{b}}.

\bibitem[Konstantinov \& Lampert(2021)Konstantinov and Lampert]{FairRank}
Konstantinov, N. and Lampert, C.~H.
\newblock Fairness through regularization for learning to rank.
\newblock \emph{CoRR}, 2021.

\bibitem[Lahoti et~al.(2019)Lahoti, Gummadi, and Weikum]{iFair}
Lahoti, P., Gummadi, K.~P., and Weikum, G.
\newblock ifair: Learning individually fair data representations for
  algorithmic decision making.
\newblock In \emph{{ICDE}}, 2019.

\bibitem[Lechner et~al.(2021)Lechner, Ben{-}David, Agarwal, and
  Ananthakrishnan]{TheoryLechner21}
Lechner, T., Ben{-}David, S., Agarwal, S., and Ananthakrishnan, N.
\newblock Impossibility results for fair representations.
\newblock \emph{CoRR}, 2021.

\bibitem[Levin et~al.(2006)Levin, Peres, and Wilmer]{LevinPeresWilmer2006}
Levin, D.~A., Peres, Y., and Wilmer, E.~L.
\newblock \emph{{Markov chains and mixing times}}.
\newblock American Mathematical Society, 2006.

\bibitem[Li et~al.(2022)Li, Zou, and Zhang]{Fairee}
Li, P., Zou, J., and Zhang, L.
\newblock Fairee: Fair classification with finite-sample and distribution-free
  guarantee.
\newblock \emph{CoRR}, 2022.

\bibitem[Liu et~al.(2022)Liu, Li, Yao, Xu, Ma, Xu, and Tong]{FairDisCo}
Liu, J., Li, Z., Yao, Y., Xu, F., Ma, X., Xu, M., and Tong, H.
\newblock Fair representation learning: An alternative to mutual information.
\newblock In \emph{{KDD}}, 2022.

\bibitem[Louizos et~al.(2016)Louizos, Swersky, Li, Welling, and Zemel]{VFAE}
Louizos, C., Swersky, K., Li, Y., Welling, M., and Zemel, R.~S.
\newblock The variational fair autoencoder.
\newblock In \emph{{ICLR}}, 2016.

\bibitem[Madras et~al.(2018)Madras, Creager, Pitassi, and Zemel]{LAFTR}
Madras, D., Creager, E., Pitassi, T., and Zemel, R.~S.
\newblock Learning adversarially fair and transferable representations.
\newblock In \emph{ICML}, 2018.

\bibitem[McNamara et~al.(2017)McNamara, Ong, and
  Williamson]{McNamara2017Provably}
McNamara, D., Ong, C.~S., and Williamson, R.~C.
\newblock Provably fair representations.
\newblock \emph{CoRR}, abs/1710.04394, 2017.

\bibitem[McNamara et~al.(2019)McNamara, Ong, and Williamson]{mcnamara}
McNamara, D., Ong, C.~S., and Williamson, R.~C.
\newblock Costs and benefits of fair representation learning.
\newblock In \emph{{AIES}}, 2019.

\bibitem[Meyer et~al.(2021)Meyer, Albarghouthi, and D'Antoni]{CertBiasTrees}
Meyer, A.~P., Albarghouthi, A., and D'Antoni, L.
\newblock Certifying robustness to programmable data bias in decision trees.
\newblock In \emph{NeurIPS}, 2021.

\bibitem[Moyer et~al.(2018)Moyer, Gao, Brekelmans, Galstyan, and Steeg]{CVIB}
Moyer, D., Gao, S., Brekelmans, R., Galstyan, A., and Steeg, G.~V.
\newblock Invariant representations without adversarial training.
\newblock In \emph{NeurIPS}, 2018.

\bibitem[Oh et~al.(2022)Oh, Won, So, Kim, Kim, Choi, and Song]{FarconVAE-G}
Oh, C., Won, H., So, J., Kim, T., Kim, Y., Choi, H., and Song, K.
\newblock Learning fair representation via distributional contrastive
  disentanglement.
\newblock In \emph{{KDD}}, 2022.

\bibitem[Oneto et~al.(2020)Oneto, Donini, Pontil, and Maurer]{FairMeta}
Oneto, L., Donini, M., Pontil, M., and Maurer, A.
\newblock Learning fair and transferable representations with theoretical
  guarantees.
\newblock In \emph{{DSAA}}, 2020.

\bibitem[Petersen et~al.(2021)Petersen, Mukherjee, Sun, and
  Yurochkin]{PostprocessIndFair}
Petersen, F., Mukherjee, D., Sun, Y., and Yurochkin, M.
\newblock Post-processing for individual fairness.
\newblock In \emph{NeurIPS}, 2021.

\bibitem[Peychev et~al.(2021)Peychev, Ruoss, Balunovic, Baader, and
  Vechev]{LASSI}
Peychev, M., Ruoss, A., Balunovic, M., Baader, M., and Vechev, M.~T.
\newblock Latent space smoothing for individually fair representations.
\newblock \emph{CoRR}, 2021.

\bibitem[Raff et~al.(2018)Raff, Sylvester, and Mills]{FairForests}
Raff, E., Sylvester, J., and Mills, S.
\newblock Fair forests: Regularized tree induction to minimize model bias.
\newblock In \emph{{AIES}}, 2018.

\bibitem[Rannen{-}Triki et~al.(2019)Rannen{-}Triki, Berman, Kolmogorov, and
  Blaschko]{triki}
Rannen{-}Triki, A., Berman, M., Kolmogorov, V., and Blaschko, M.~B.
\newblock Function norms for neural networks.
\newblock In \emph{{ICCV} Workshops}, 2019.

\bibitem[Ranzato et~al.(2021)Ranzato, Urban, and Zanella]{FairTrainingIndFair}
Ranzato, F., Urban, C., and Zanella, M.
\newblock Fair training of decision tree classifiers.
\newblock \emph{CoRR}, 2021.

\bibitem[Roy \& Boddeti(2019)Roy and Boddeti]{MaxEnt-ARL}
Roy, P.~C. and Boddeti, V.~N.
\newblock Mitigating information leakage in image representations: {A} maximum
  entropy approach.
\newblock In \emph{{CVPR}}, 2019.

\bibitem[Ruoss et~al.(2020)Ruoss, Balunovic, Fischer, and Vechev]{LCIFR}
Ruoss, A., Balunovic, M., Fischer, M., and Vechev, M.~T.
\newblock Learning certified individually fair representations.
\newblock In \emph{NeurIPS}, 2020.

\bibitem[Sarhan et~al.(2020)Sarhan, Navab, Eslami, and
  Albarqouni]{orthogonal-disentangled}
Sarhan, M.~H., Navab, N., Eslami, A., and Albarqouni, S.
\newblock Fairness by learning orthogonal disentangled representations.
\newblock In \emph{{ECCV}}, 2020.

\bibitem[Shamsabadi et~al.(2023)Shamsabadi, Wyllie, Franzese, Dullerud, Gambs,
  Papernot, Wang, and Weller]{Profitt}
Shamsabadi, A.~S., Wyllie, S.~C., Franzese, N., Dullerud, N., Gambs, S.,
  Papernot, N., Wang, X., and Weller, A.
\newblock Confidential-{PROFITT}: Confidential {PRO}of of fair training of
  trees.
\newblock In \emph{ICLR}, 2023.

\bibitem[Shen et~al.(2021)Shen, Wong, and Kankanhalli]{Shen2021FRL}
Shen, X., Wong, Y., and Kankanhalli, M.~S.
\newblock Fair representation: Guaranteeing approximate multiple group fairness
  for unknown tasks.
\newblock \emph{CoRR}, abs/2109.00545, 2021.

\bibitem[Shui et~al.(2022)Shui, Chen, Li, Wang, and Gagn{\'{e}}]{fairpath}
Shui, C., Chen, Q., Li, J., Wang, B., and Gagn{\'{e}}, C.
\newblock Fair representation learning through implicit path alignment.
\newblock In \emph{{ICML}}, 2022.

\bibitem[Simonite(2018)]{wired}
Simonite, T.
\newblock When it comes to gorillas, google photos remains blind, 2018.
\newblock
  \url{https://www.wired.com/story/when-it-comes-to-gorillas-google-\\photos-remains-blind/},
  accessed: 2023-01-16.

\bibitem[Song et~al.(2019)Song, Kalluri, Grover, Zhao, and Ermon]{MIFR}
Song, J., Kalluri, P., Grover, A., Zhao, S., and Ermon, S.
\newblock Learning controllable fair representations.
\newblock In \emph{{AISTATS}}, 2019.

\bibitem[Sriperumbudur et~al.(2012)Sriperumbudur, Fukumizu, Gretton,
  Sch{\"o}lkopf, and Lanckriet]{tvd3}
Sriperumbudur, B.~K., Fukumizu, K., Gretton, A., Sch{\"o}lkopf, B., and
  Lanckriet, G. R.~G.
\newblock {On the empirical estimation of integral probability metrics}.
\newblock \emph{Electronic Journal of Statistics}, 2012.

\bibitem[Tan et~al.(2020)Tan, Yeom, Fredrikson, and Talwalkar]{KernelLFR}
Tan, Z., Yeom, S., Fredrikson, M., and Talwalkar, A.
\newblock Learning fair representations for kernel models.
\newblock In \emph{{AISTATS}}, volume 108 of \emph{Proceedings of Machine
  Learning Research}, pp.\  155--166. {PMLR}, 2020.

\bibitem[Villalobos \& Ho(2022)Villalobos and Ho]{IncreasingData}
Villalobos, P. and Ho, A.
\newblock Trends in training dataset sizes.
\newblock \url{https://epochai.org/blog/trends-in-training-dataset-sizes},
  2022.
\newblock Accessed: 2023-01-18.

\bibitem[Wang et~al.(2022)Wang, Li, and Wang]{SynthesisFairTree}
Wang, J., Li, Y., and Wang, C.
\newblock Synthesizing fair decision trees via iterative constraint solving.
\newblock In \emph{CAV}, 2022.

\bibitem[Wasserman(2019)]{kde1}
Wasserman, L.
\newblock Density estimation.
\newblock 2019.
\newblock URL \url{https://www.stat.cmu.edu/~larry/=sml/densityestimation.pdf}.

\bibitem[Xie et~al.(2017)Xie, Dai, Du, Hovy, and Neubig]{adv-feat-learn}
Xie, Q., Dai, Z., Du, Y., Hovy, E.~H., and Neubig, G.
\newblock Controllable invariance through adversarial feature learning.
\newblock In \emph{{NIPS}}, pp.\  585--596, 2017.

\bibitem[Xu et~al.(2020)Xu, Zhao, Song, Stewart, and Ermon]{xu2020theory}
Xu, Y., Zhao, S., Song, J., Stewart, R., and Ermon, S.
\newblock A theory of usable information under computational constraints.
\newblock In \emph{{ICLR}}, 2020.

\bibitem[Zemel et~al.(2013)Zemel, Wu, Swersky, Pitassi, and Dwork]{LFR}
Zemel, R.~S., Wu, Y., Swersky, K., Pitassi, T., and Dwork, C.
\newblock Learning fair representations.
\newblock In \emph{{ICML}}, 2013.

\bibitem[Zhang \& Ntoutsi(2019)Zhang and Ntoutsi]{Faht}
Zhang, W. and Ntoutsi, E.
\newblock {FAHT:} an adaptive fairness-aware decision tree classifier.
\newblock In \emph{{IJCAI}}, 2019.

\bibitem[Zhao \& Gordon(2022)Zhao and Gordon]{TheoryZhao19}
Zhao, H. and Gordon, G.~J.
\newblock Inherent tradeoffs in learning fair representations.
\newblock \emph{J. Mach. Learn. Res.}, 2022.

\bibitem[Zhao et~al.(2020)Zhao, Coston, Adel, and Gordon]{CondLFR}
Zhao, H., Coston, A., Adel, T., and Gordon, G.~J.
\newblock Conditional learning of fair representations.
\newblock In \emph{{ICLR}}. OpenReview.net, 2020.

\end{thebibliography}
\bibliographystyle{icml2023}

\vfill
\clearpage

\appendix 

\vfill
\pagebreak

\section{Prior Work on Provably Fair Representation Learning} \label{app:survey}

Here we describe prior work on FRL aiming to produce provable guarantees.
Works that solely propose FRL methods or consider other theoretical aspects were covered in~\cref{sec:related}.
We remark that most works, in addition to the requirement from~\cref{sec:requirements} we will note is violated, usually also violate R5, providing no empirical evaluation of their certificate, often due to it not being exactly computable in practice. 

The work of~\citet{feldman2015disparate} was the first to establish the link between the balanced accuracy of the optimal adversary and fairness (which we have discussed in \cref{sec:background}) but for the case of disparate impact (the $80\%$ rule). They use this to motivate their method, which is in the category of in-processing methods (SVM training) and thus not FRL. \citet{McNamara2017Provably} further establish the same link for demographic parity, as well as a similar formulation based on entropy. Their adversarial training method is inspired by this but states no formal certificate. \citet{LAFTR} provides proof of similar results for more metrics, and discusses the relationship to total variation distance. As other works, they use this to motivate their adversarial training approach, where, as discussed in~\cref{sec:requirements} they approximately
(with stochastic gradient descent) optimize an approximation of an uncomputable certificate. 
\citet{CondLFR} also show that TV distance can bound fairness metrics but focus on simultaneously achieving equalized odds and accuracy parity, showing that perfect equalized odds imply bounds on demographic parity; their method is, similar to previous works, a min-max approximate optimization with no strict certificate.
None of these works thus provide a practical certificate, at best violating requirements R1, R2 and R5.

The work of \citet{Shen2021FRL} analyzes transferability between 7 group fairness notions under perfect or approximate fairness and discriminativeness. However, their method aims to minimize a finite sample estimate of MMD, which is introduced as an asymptotic approximation to TV distance (violating R2). As noted in~\cref{sec:requirements}, while reporting the obtained certificate (which the authors do) may be useful in some cases, the certificate is not valid in a strict sense, as the approximations are not accounted for.

\citet{fair-universal} state that perfect total variation distance implies perfect demographic parity. Further, they formulate the optimal adversary for each type of the reconstruction loss, but then note that it is not possible to compute those in practice (except for the restricted case when input distribution is \eg a mixture of Gaussians, violating R3), and they simply use it as a motivation for min-max optimization of an adversary in training.

\citet{sIPM} consider the family of integral probability metrics (IPM), and restrict the witness function class (compared to TV distance) to obtain a SigmoidIPM metric. As noted above, this effectively restricts the domain of classifiers (and is thus not model-free, violating R4) that the certificate applies to. The authors note this, and provide some classes for which the certificate would hold, however under the assumption of SGD optimality.

The works of~\citet{CVIB,MIFR,FCRL} all focus on the unfairness bounds via mutual information, stating bounds on this uncomputable quantity and approximately optimizing Monte Carlo estimates, thus as noted above, producing no strict certificate.
The missing further analysis of the approximations would likely lead to violations of R1 or R2.

Several works study provable FRL restricted to kernel models, thus not model-free (violating R4). \citet{donini2018erm} study in-processing with kernel-based methods, and their results apply to FRL (as noted) only in the special case of linear models. \citet{KernelLFR} provide model-aware bounds for kernel models under mild further assumptions, similar to~\citet{ObliviousKernels} who optimize a relaxation of the MMD metric. 
All these methods are restricted by their dependency on the model class, as kernel models, despite having specific use-cases, are generally unable to obtain state-of-the-art results for common problems of interest, and are often not scalable to complex data.

\citet{feng2019adv} bound the accuracy of the optimal Lipschitz-continuous adversary (though the general optimal adversary does not have to be Lipschitz continuous, thus violating R4) using Wasserstein distance (an instance of the IPM) and the Lipschitz constant. They perform adversarial training with K-Lipschitz adversaries, however it is not possible to exactly compute the Wasserstein distance in practice, and produce the certificate.

\citet{gitiaux} propose to add Gaussian noise to representations to obtain a finite sample two-sided expectation bound (violating R1) on how well DP distance can be approximated. The bound depends on the $l_\infty$ norm of the encoder (obtained by upper bounding and approximating the $\xi^2$ mutual information). As computing this norm is NP-hard~\cite{triki}, this is also not computable, making it practically impossible to satisfy R5.

Finally,~\citet{FNF} propose an FRL method where the encoder is a normalizing flow, which allows computation of probability densities in the latent space from densities in the input space, and bounding of the total variation distance. 
Under the assumption that the input density is known, this provides a practical certificate.
However, as noted in~\cref{sec:requirements}, for this certificate to be valid it is necessary to estimate densities with high confidence, which is generally only feasible under restrictive assumptions on the distribution, which in turn violates R3.

\paragraph{Other theoretical contributions}
Finally, it is worth mentioning that a line of theoretical work studies other aspects of FRL besides designing fairness certificates, such as tradeoffs between different notions, fairness-utility tradeoffs, and impossibility results~\citep{kleinberg,mcnamara, TheoryLechner21, TheoryZhao19}.
\section{Mathematical Details} \label{app:math}

We first derive~\cref{eq:dp2ba}. More details can be found in \citet{LAFTR}, and here we provide an overview:

\begin{align*}
    \Delta^{DP}_{\mathcal{Z}_0, \mathcal{Z}_1}(g) &= \left|\E_{z \sim \mathcal{Z}_0}[g(z)] - \E_{z \sim \mathcal{Z}_1}[g(z)]\right| \\
&= \left|\E_{z \sim \mathcal{Z}_0}[-g(z)] + \E_{z \sim \mathcal{Z}_1}[g(z)]\right| \\
&= \left|\E_{z \sim \mathcal{Z}_0}[1-g(z)] + \E_{z \sim \mathcal{Z}_1}[g(z)] - 1 \right| \\
&= \left|2{BA}_{\mathcal{Z}_0, \mathcal{Z}_1}(g) - 1 \right|
\end{align*}

From this, we can argue that we can drop the absolute value and bound the balanced accuracy of $g$ with the balanced accuracy of $h^*$, finally arriving at~\cref{eq:dp2ba}.

Then, we formally state the Hoeffding's inequality and the Clopper-Pearson binomial confidence intervals, used in our upper-bounding procedure in \cref{sec:proof}.

\emph{Hoeffding's inequality \citep{Hoeffding}}: Let $X^{(1)}, \ldots, X^{(n)}$ be independent random variables such that \mbox{$P(X^{(j)} \in [a^{(j)}, b^{(j)}]) = 1$}. Let $\hat{\mu}=\frac{X^{(1)} + \ldots X^{(n)}}{n}$ and $\mu = \E[\hat{\mu}]$. It holds that: 
\begin{equation} \label{eq:hoeffding}
    P(\mu - \hat{\mu} \geq t) \leq \exp\left( \frac{-2n^2t^2}{\sum_{i=1}^n (b^{(i)} - a^{(i)})^2} \right). \nonumber 
\end{equation} 

\emph{Clopper-Pearson binomial proportion confidence intervals \citep{CP}}: Assume a binomial distribution with an unknown success probability $\theta$. Given $m$ successes out of $n$ experiments, it holds that:
\begin{equation} \label{eq:cp}
  B(\frac{\alpha}{2}; m, n-m+1) < \theta < B(1 - \frac{\alpha}{2}; m+1, n-m) 
\end{equation}
with confidence at least $1-\alpha$ over the sampling process, where $B(p; v,w)$ denotes the $p$-th quantile of a beta distribution with parameters $v$ and $w$. The Clopper-Pearson confidence interval has symmetric coverage probability, \ie each side of \cref{eq:cp} holds with confidence $1-\alpha/2$.

\section{K-means Restricted Encoders} \label{app:kmeans}

To substantiate our key claim from~\cref{sec:proof} that our statistical procedure can be applied to any restricted encoder, we consider encoders based on \emph{k-means clustering}---\ie for a given $k$, we cluster the input representations, and map all examples from the same cluster to the respective cluster center. 
This fits our definition of restricted encoders given in~\cref{sec:proof}, as the distribution $\mathcal{Z}$ has finite support.

\paragraph{Results}
For $k \in [2, 500]$, in \cref{fig:kmeans} we report the DP distance (empirical and upper bound, \ie the certificate) of the k-means restricted encoder on the ACSIncome-CA dataset, alongside the results of FARE from \cref{fig:MainACS} (left).
While we were able to directly apply our statistical procedure to obtain upper bounds on unfairness, as k-means encoders are fairness-unaware, both empirical results and upper bounds are unfavorable and greatly outperformed by FARE. 
This illustrates that finding suitable classes of restricted encoders is not simple, and highlights the problem of finding other well-performing encoder classes.

\begin{figure}[t]
    \includegraphics[width=0.38\textwidth]{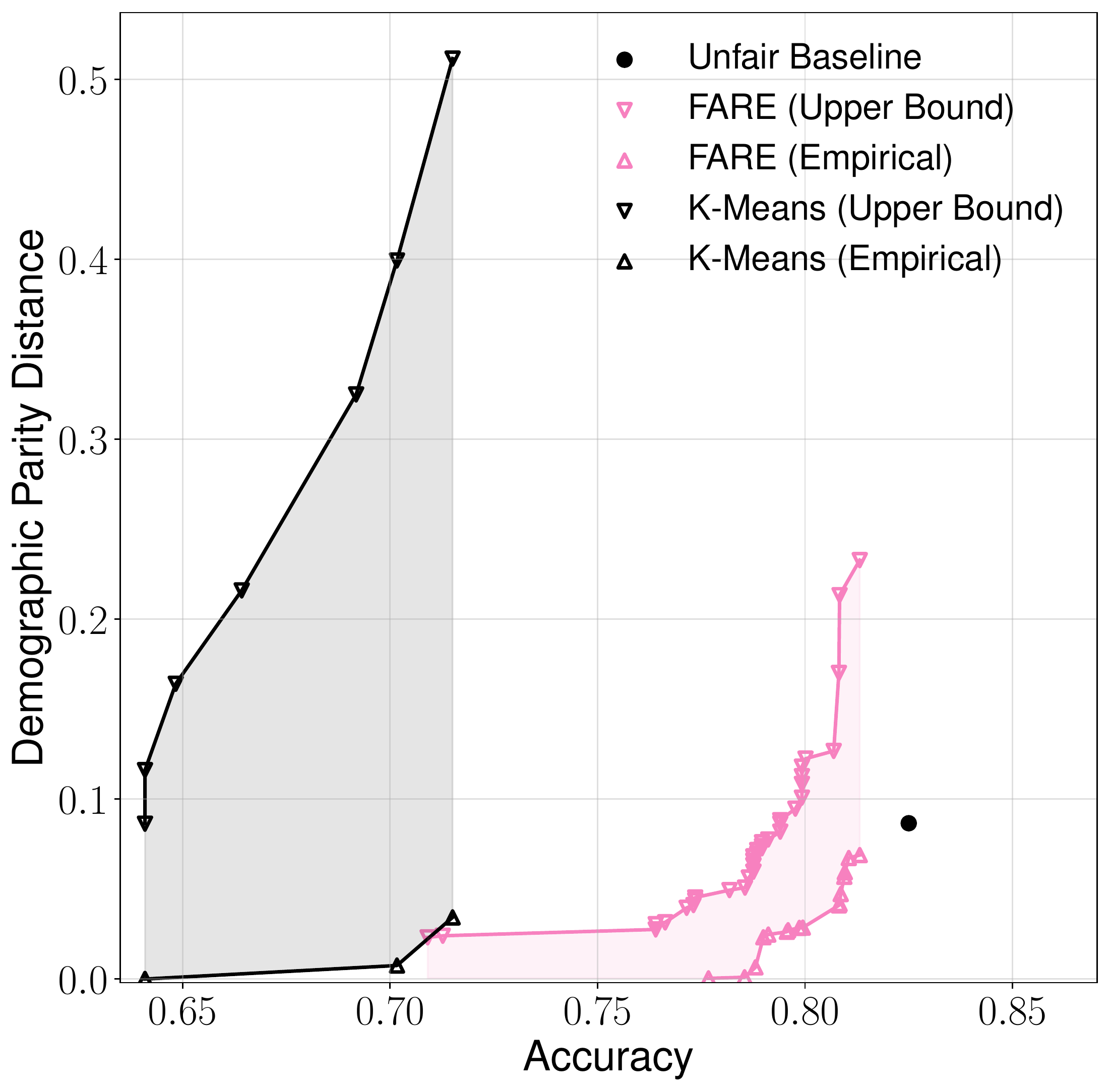}
    \centering
    \caption{Comparing the k-means restricted encoders and FARE on ACSIncome-CA.}
    \label{fig:kmeans}
  \end{figure} 
\section{Generalizations of FARE} \label{app:generalizations}

In this section we discuss the generalizations of FARE beyond the settings considered in the main paper. 
In~\cref{app:ssec:multi} we focus on the relaxation of the requirements of binary classification and binary sensitive attribute, and in~\cref{app:ssec:metrics} on additional unfairness metrics.

\subsection{Beyond binary sensitive attributes and labels} \label{app:ssec:multi}

We now describe how to extend FARE to the case of multivalued $s$ and $y$, demonstrating that our method is not fundamentally limited to the binary setting.
We use a common way to generalize DP distance, by considering the \emph{maximal} absolute difference in prediction rates of \emph{any} class $y$, w.r.t. \emph{any} two sensitive groups $i$ and $j$.

First, notice that our current certificate holds for any cardinality of $y$. 
Namely, we bound the unfairness of any binary classifier, and it can be easily seen that the most unfair classifier will always be binary (\ie will only predict 2 distinct classes, as this always maximizes the difference in prediction rates).
Next, for non-binary $s$, we can simply invoke the same procedure for each pair of sensitive groups $(i, j)$ by providing the procedure with only samples from those groups. 
Reducing $\epsilon$ in each of these invocations $\binom{|s|}{2}$ times leads to the same $1-\epsilon$ confidence for our certificate as before. 
Regarding our training procedure, while in the main paper we state the binary formulation of Gini impurity, the original definition directly supports multiple values of $y$/$s$, thus FARE training is already applicable to the general case.

\paragraph{Results}
To demonstrate that FARE can be applied to this setting on a real example, we provide preliminary results on a modified version of the ACSIncome-CA dataset, which represents 4-class classification (income classes thresholded at $[20k, 50k, 100k]$ dollars) and uses 3 sensitive groups (instead of sex we use a coarsening of race). 
We run FARE with above changes on this dataset and compute the certificate; we naively use the same hyperparameters as for ACSIncome-CA.
The results are shown in~\cref{fig:multi}.

\begin{figure}[t]
    \centering
    \includegraphics[width=0.38\textwidth]{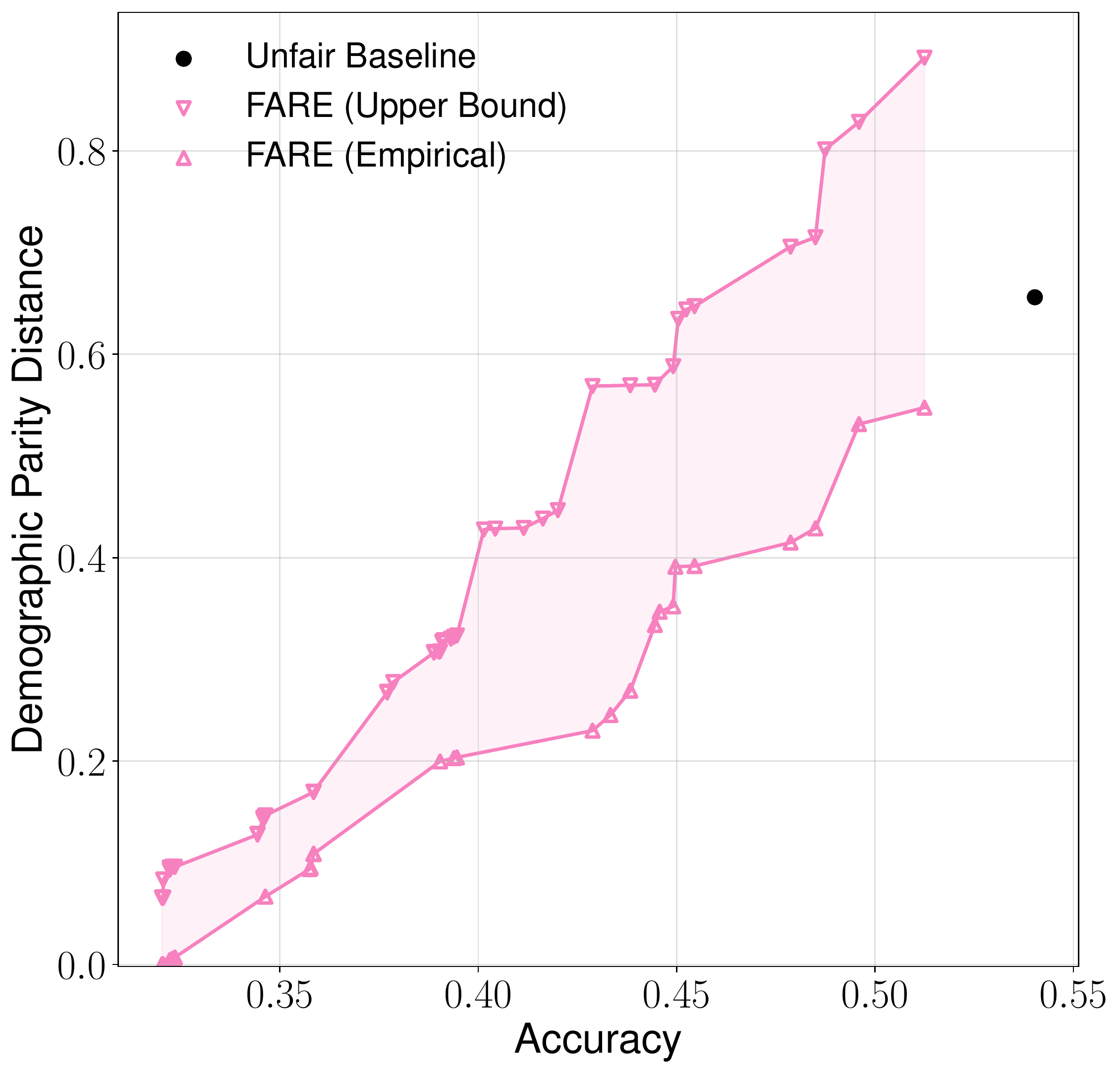}
    \caption{FARE can be generalized to multivalued $y/s$.}
    \label{fig:multi}
\end{figure}

\begin{figure*}[t]
    \newcommand{\relSubfigWidth}{0.42\textwidth}
    \newcommand{\innerWidth}{\textwidth}
    \centering
    \hfill
    \begin{subfigure}{\relSubfigWidth}
        \centering 
        \includegraphics[width=\innerWidth]{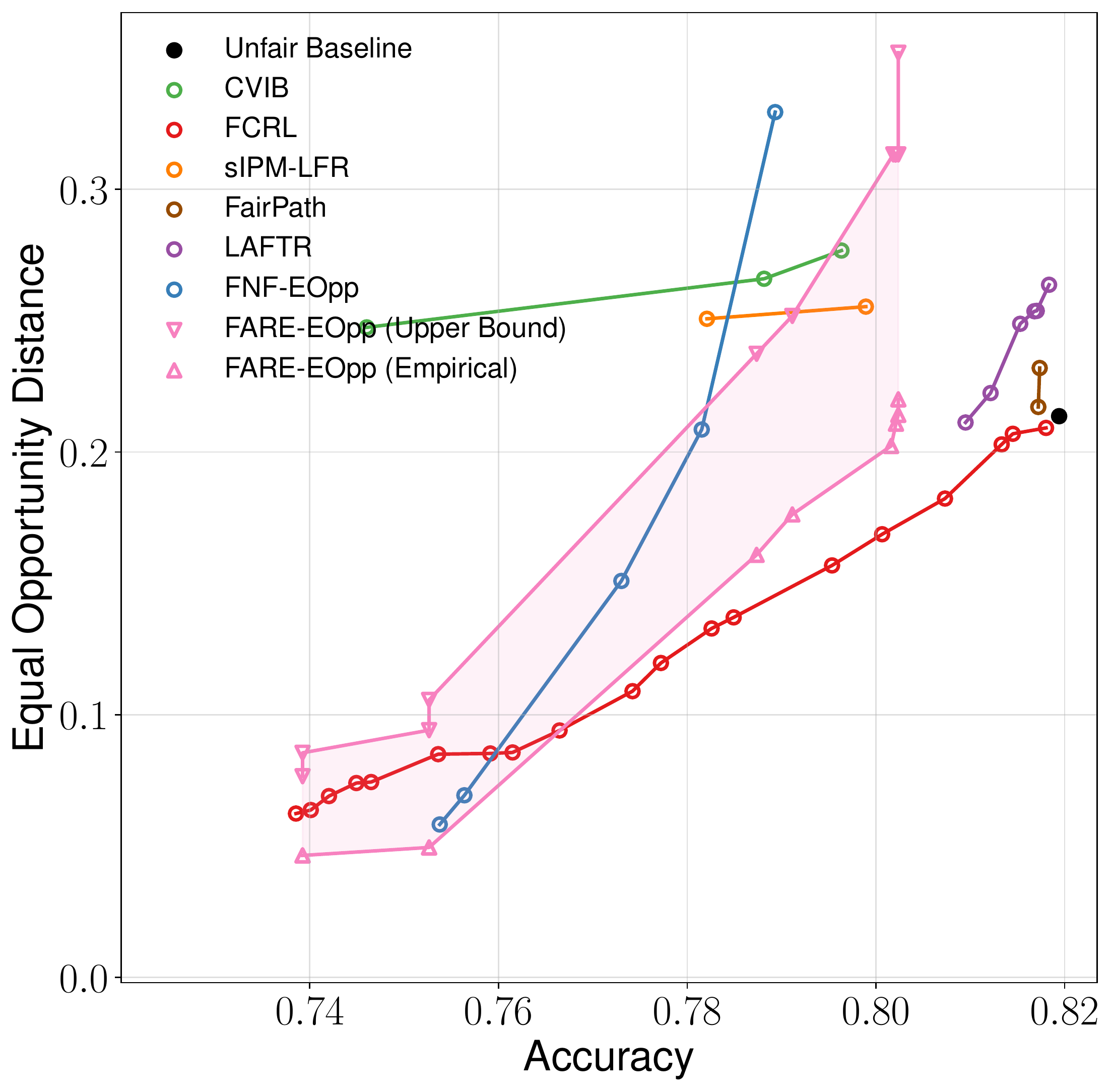}
    \end{subfigure}
    \hfill
    \hspace{1.8em}
    \begin{subfigure}{\relSubfigWidth}
      \centering
      \includegraphics[width=\innerWidth]{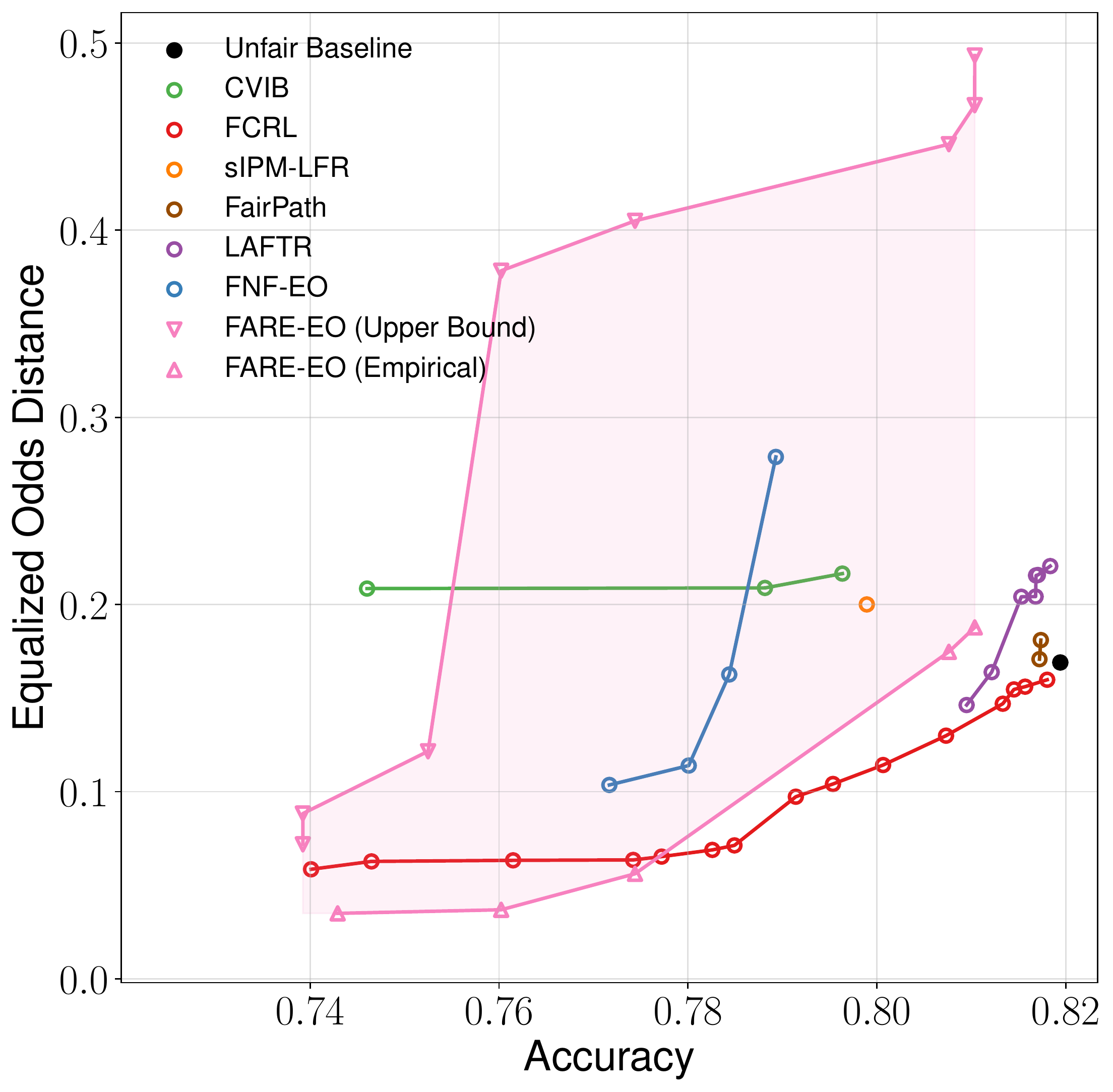}
    \end{subfigure}
    \hspace{1.5em}
    \hfill
    \caption{Generalizing FARE to other metrics: equal opportunity (left) and equalized odds (right), on the Health dataset.}
    \label{fig:newmetrics}
\end{figure*} 
While we presented a straightforward extension, we believe future work on these settings may be able to further improve the tradeoffs and proof tightness by \eg hyperparameter tuning or more elaborate extensions of the statistical procedure for the case of multiple sensitive groups.

\subsection{Other fairness metrics} \label{app:ssec:metrics}
We demonstrate that FARE can easily be extended beyond demographic parity distance $\Delta^{DP}_{\zdi_0, \zdi_1}$, the unfairness metric used in our main results, by considering the commonly used \emph{equal opportunity distance} 
\begin{equation*} 
\Delta^{EOpp}_{\zdi_0, \zdi_1}(g) := \left|\E_{\vz \sim \zdi_0^1}[g(\vz)] - \E_{\vz \sim \zdi_1^1}[g(\vz)]\right|,
    \nonumber
\end{equation*}
and the \emph{equalized odds distance}
\begin{equation*} 
    \Delta^{EO}_{\zdi_0, \zdi_1}(g) := \frac{1}{2} \sum_{y=0}^1 \left|\E_{\vz \sim \zdi_0^y}[g(\vz)] - \E_{\vz \sim \zdi_1^y}[g(\vz)]\right|,
        \nonumber
\end{equation*}
where we use $\zdi_S^Y$ to denote the conditional distribution of $\vz$ where $s=S$ and $y=Y$.

First, we need to extend our statistical procedure from~\cref{sec:proof} to compute certificates of other metrics.
To this end, we observe that
\begin{equation*}
\Delta^{EOpp}_{\zdi_0, \zdi_1} = \Delta^{DP}_{\zdi_0^1, \zdi_1^1},
\end{equation*}
and similarly 
\begin{equation*}
\Delta^{EO}_{\zdi_0, \zdi_1} = \frac{1}{2} \sum_{y=0}^1 \Delta^{DP}_{\zdi_0^y, \zdi_1^y}.
\end{equation*}
This implies that the identical statistical procedure as used for the DP distance can be used to bound EOpp/EO distances, with the only change being the distribution it operates on, \ie we restrict the data we provide to the statistical procedure to only samples where $y=Y$.
Note that for EO we apply the procedure twice with $\epsilon/2$ to retain the total confidence of $1-\epsilon$, per union bound. 

Next, we need to adapt the heuristics used in our tree-based instantiation, to optimize EOpp/EO instead of DP. 
To do this, we simply apply $Gini_s(D)$ in the definition of $FairGini(D)$ only to samples with $y=1$ (for EOpp) and independently to samples with $y=0$ and $y=1$ (for EO), where we take a weighted sum based on the number of samples with each $y$.

\paragraph{Results}
We perform two experiments on the Health dataset (for EO and EOpp) with extensions listed above. 
For baselines, we run FNF with the EO/EOpp argument set, and use the same runs of other baselines as in~\cref{fig:MainHealth}, as these methods do not provide code for metrics other than DP and generally leave such extensions to future work (which originally prompted our focus on DP as the most broadly considered metric).

The results are presented in \cref{fig:newmetrics}.
In both cases, we can come to similar conclusions as in the corresponding DP experiment in~\cref{fig:MainHealth}. 
FARE obtains favorable empirical tradeoffs, while being the only one with an unfairness certificate. 
The certificates are relatively tight, but progressively get looser for higher accuracies---this is much more pronounced for equalized odds. 

\section{Details of Experimental Evaluation} \label{app:expdetails}

In this section we provide details of our experimental evaluation omitted from the main text.

\begin{table}[t]
  \centering
  \begin{tabular}{lllll}
    \toprule 
    Dataset & $n_{\text{train}}$ & $n_{\text{test}}$ & $R_s$ & $R_y$ \\
    \midrule
    ACSIncome-CA & 165 546 & 18 395 & 0.46 & 0.64 \\
    ACSIncome-US & 1 429 070 & 158 786 & 0.48 & 0.68 \\
    Health & 174 732 & 43 683 & 0.35 & 0.68 \\
    \bottomrule
  \end{tabular}
  \caption{Statistics of evaluated datasets.}
  \label{tab:datasets}
\end{table}

\paragraph{Datasets}

As mentioned in~\cref{sec:experiments}, we perform our experiments on ACSIncome~\citep{ding2021retiring} and Health~\citep{health} datasets.
In~\cref{tab:datasets} we show some general statistics about the datasets: size of the training and test set ($n_{\text{train}}$ and $n_{\text{test}}$), base rate for the sensitive attribute $s$ ($R_s$, percentage of the majority group out of the total population), and base rate for the label $y$ ($R_y$, accuracy of the majority class predictor).

ACSIncome is a dataset recently proposed by~\citet{ding2021retiring} as an improved version of UCI Adult, with comprehensive data from US Census collected across all states and several years (we use 2014). 
The task is to predict whether an individual's income is above \$50,000, and we consider sex as a sensitive attribute.
We evaluate our method on two variants of the dataset: ACSIncome-CA, which contains only data from California, and ACSIncome-US, which merges data from all states and is thus significantly larger but also more difficult, due to distribution shift.
10\% of the total dataset is used as the test set.
We also use the Health dataset~\citep{health}, where the goal is to predict the Charlson Comorbidity Index, and we consider age as a sensitive attribute (binarized by thresholding at 60 years).
For this dataset perform the same preprocessing as~\citet{FNF}, and use 20\% of the total dataset as the test set.

\paragraph{Evaluation procedure} 
For our main experiments, as a downstream classifier we use a 1-hidden-layer neural network with hidden layer size $50$, trained until convergence on representations normalized such that their mean is approximately $0$ and standard deviation approximately $1$. We train the classifier $5$ times and in our main figures report the average test set accuracy, and the maximal DP distance obtained, following the procedure of \citet{FCRL}.

\begin{figure*}[t]
  \newcommand{\relSubfigWidth}{0.31\textwidth}
  \newcommand{\innerWidth}{\textwidth}
  \centering
  \begin{subfigure}{\relSubfigWidth}
      \centering 
      \includegraphics[width=\innerWidth]{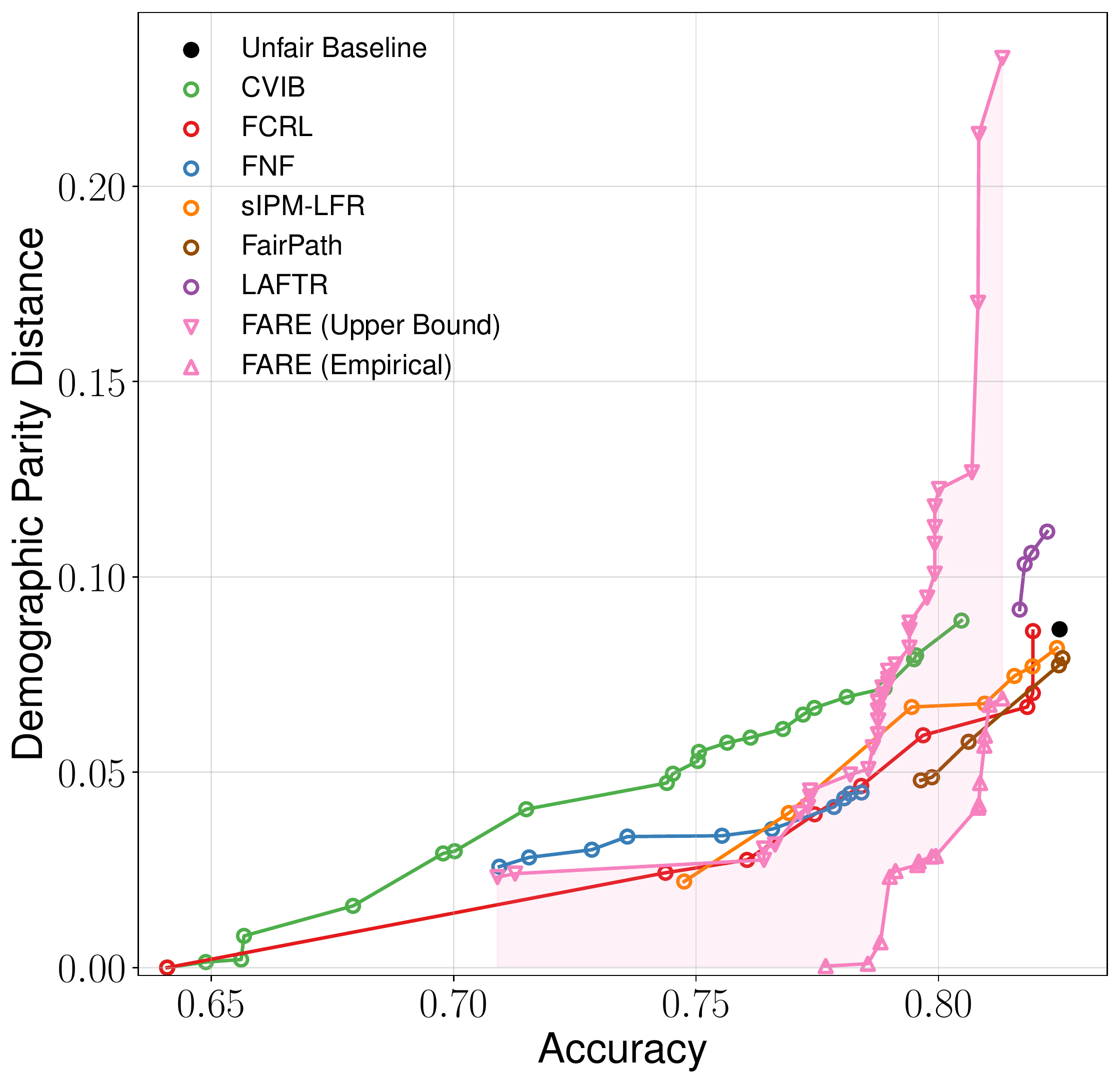}
  \end{subfigure}
  \hfill \hspace{1em}
  \begin{subfigure}{\relSubfigWidth}
    \centering
    \includegraphics[width=\innerWidth]{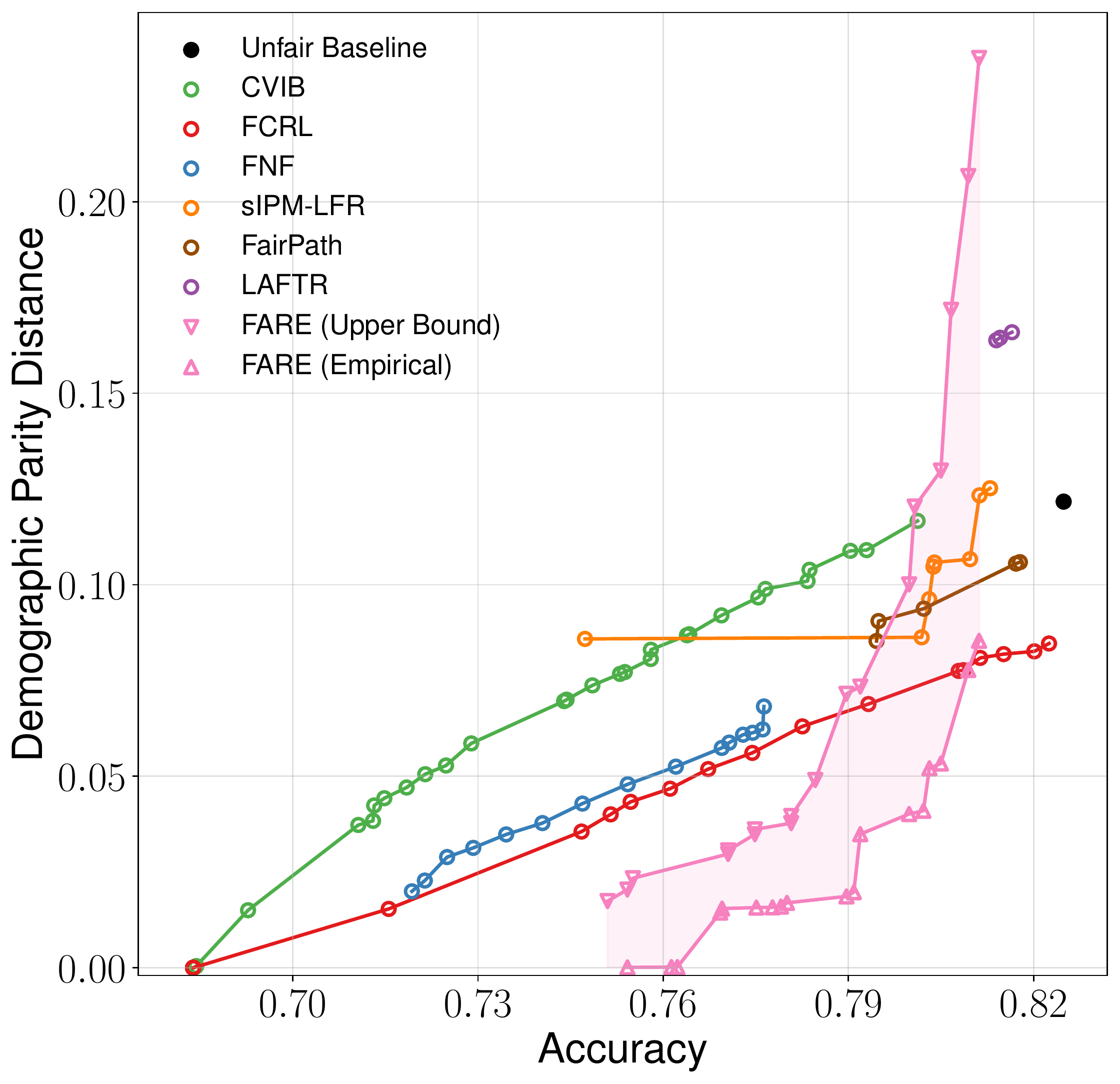}
  \end{subfigure}
  \hfill \hspace{1em}
  \begin{subfigure}{\relSubfigWidth}
    \centering
    \includegraphics[width=\innerWidth]{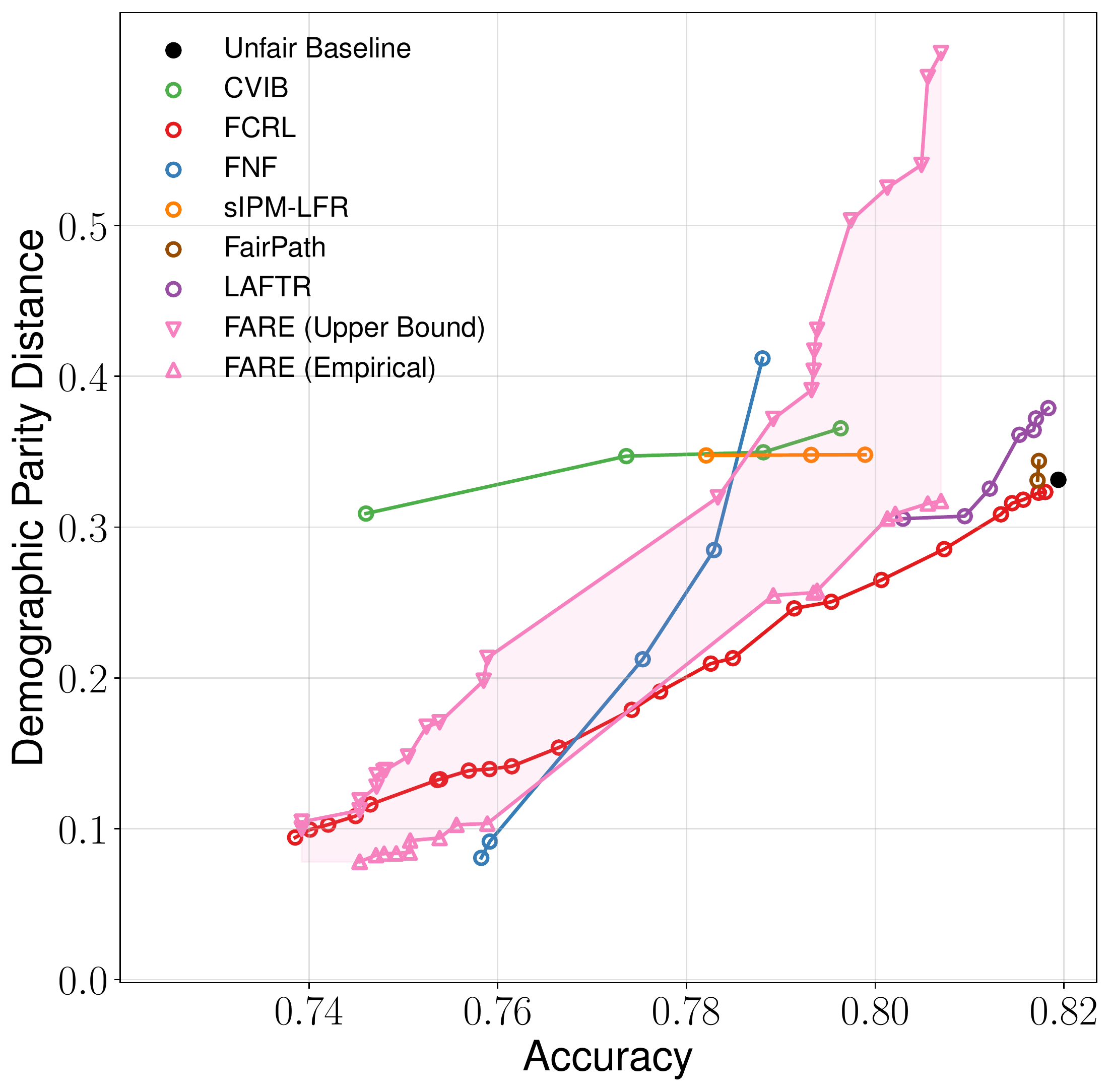}
  \end{subfigure}
  \caption{Extended evaluation of FRL methods on ACSIncome-CA (left), ACSIncome-US (middle) and Health (right).}
  \label{fig:FullMerged}
\end{figure*} 
\paragraph{Hyperparameters} 
For baselines, we follow the instructions in respective writeups, as well as \citet{FCRL} to densely explore an appropriate parameter range for each value (linearly, or exponentially where appropriate), aiming to obtain different points on the accuracy-fairness curve. For CVIB, we explore $\lambda \in [0.01, 1]$ and $\beta \in [0.001, 0.1]$. For FCRL on ACSIncome we explore $\lambda = \beta \in [0.01, 2]$, and for Health  $\lambda \in [0.01, 2]$ and $\beta = 0.5 \lambda$. For FNF, we explore $\gamma \in [0, 1]$. For sIPM, we use $\lambda \in [0.0001, 1.0]$ and $\lambda_F \in [0.0001, 100.0]$, extending the suggested ranges. For FairPath we set the parameter $\kappa \in [0, 100]$. Finally, for LAFTR we use $g \in [0.1, 50]$, extending the range of $[0, 4]$ suggested by \citep{FCRL}. We adjust the parameters for transfer learning whenever supported by the method.

For FARE, there are four hyperparameters: $\gamma$ (used for the criterion, where larger $\gamma$ puts more focus on fairness), $\bar{k}$ (upper bound for the number of leaves), $\underline{n_i}$ (lower bound for the number of examples in a leaf), and $v$ (the ratio of the training set to be used as a validation set). Note that all parameters affect accuracy, empirical fairness, and the tightness of the fairness bound. For example, larger $\underline{n_i}$ is likely to improve the bound by making \cref{lemma2} tighter, as more samples can be used for estimation. For the same reason, increasing $v$ improves the tightness of the bound, but may slightly reduce the accuracy as fewer samples remain in the training set used to train the tree. In our experiments we investigate $\gamma \in [0, 1]$, $\bar{k} \in [2, 200]$, $\underline{n_i} \in [50, 1000]$, $v \in \{0.1, 0.2, 0.3, 0.5\}$. For the upper-bounding procedure, we always set $\epsilon = 0.05$, $\epsilon_b = \epsilon_s = 0.005$, and thus $\epsilon_c = 0.04$. Finally, when sorting categorical features as described in \cref{sec:tree}, we use $q \in \{1, 2, 4\}$ in all cases.

\begin{table}[t]
  \centering
    \resizebox{0.4\textwidth}{!}{ 
    \begin{tabular}{c|c|ccccc}
   \toprule
    $y$ & $\Delta^{DP}_{\zdi_0, \zdi_1}$ & $T^\star$ & FARE & FCRL & FNF & sIPM \\
    \midrule
    \multirow{2}{*}{MIS} & $\leq$ 0.30 & 0.73 & 79.3 & 78.6 & 79.2 & 79.8 \\
     & $\leq$ 0.20 & 0.73 & 79.3 & 78.6 & 78.9 & 79.8 \\
     & $\leq$ 0.15 & 0.73 & 79.3 & 78.6 & 78.9 & 79.6 \\
     & $\leq$ 0.10 & 0.59 & 79.0 & 78.6 & 78.9 & 79.0 \\
     & $\leq$ 0.05 & 0.57 & 78.7 & 78.6 & 78.7 & 78.6 \\
    \midrule
    \multirow{2}{*}{NEU} & $\leq$ 0.30 & 0.72 & 73.2 & 72.4 & 71.9 & 78.8 \\
     & $\leq$ 0.20 & 0.72 & 73.2 & 72.4 & 71.9 & 76.6 \\
     & $\leq$ 0.15 & 0.72 & 73.2 & 72.4 & 71.8 & 73.2 \\
     & $\leq$ 0.10 & 0.42 & 73.1 & 72.2 & 71.8 & / \\
     & $\leq$ 0.05 & 0.43 & 72.1 & 71.4 & 71.7 & / \\
    \midrule
    \multirow{2}{*}{ART} & $\leq$ 0.30 & 0.55 & 74.4 & 70.7 & 68.9 & 78.3 \\
     & $\leq$ 0.20 & 0.55 & 74.4 & 70.7 & 68.9 & 78.3 \\
     & $\leq$ 0.15 & 0.48 & 74.2 & 70.1 & 68.9 & / \\
     & $\leq$ 0.10 & 0.12 & 69.5 & 69.6 & 68.7 & / \\
     & $\leq$ 0.05 & 0.12 & 69.5 & 69.5 & 68.5 & / \\
    \midrule
    \multirow{2}{*}{MET} & $\leq$ 0.30 & 0.48 & 74.0 & 72.5 & 76.2 & / \\
     & $\leq$ 0.20 & 0.48 & 69.8 & 69.2 & 75.0 & / \\
     & $\leq$ 0.15 & 0.33 & 68.7 & 67.9 & 73.2 & / \\
     & $\leq$ 0.10 & 0.12 & 66.1 & 66.7 & 73.2 & / \\
     & $\leq$ 0.05 & 0.12 & 66.1 & 65.3 & / & / \\
    \midrule
    \multirow{2}{*}{MSC} & $\leq$ 0.30 & 0.59 & 71.3 & 70.5 & 73.5 & 77.6 \\
     & $\leq$ 0.20 & 0.48 & 67.4 & 70.5 & 73.0 & / \\
     & $\leq$ 0.15 & 0.12 & 63.0 & 69.7 & / & / \\
     & $\leq$ 0.10 & 0.12 & 63.0 & 69.0 & / & / \\
     & $\leq$ 0.05 & 0.12 & 63.0 & / & / & / \\
    \bottomrule
    \end{tabular}}
    \caption{Extended transfer learning results on Health.}
    \label{tab:transferext}
  \end{table} 
   
\paragraph{Omitted details of additional experiments}
For the experiment in \cref{fig:downstream_main} we explore the following classifiers: (i) 1-hidden-layer neural network (1-NN) with hidden layer sizes $50$ and $200$, (ii) 2-NN with hidden layers of size $(50, 50)$, as well as $(200, 100)$, (iii) logistic regression, (iv) random forest classifier with $100$ and $1000$ estimators, (v) decision tree with $100$ and an unlimited number of leaf nodes. We train all these classifiers with a standardization preprocessing step as described above. We further train one variant of 1-NN, 2-NN, random forest, and logistic regression, on unnormalized data. All described models are trained both to predict the task label $y$, and to maximize unfairness, \ie predict $s$, leading to 24 evaluated models. 

For transfer learning (\cref{tab:transfer}), the five transfer tasks represent prediction of the following attributes from the Health dataset: \textsc{MISCHRT} (MIS), \textsc{NEUMENT} (NEU), \textsc{ARTHSPIN} (ART), \textsc{METAB3} (MET), \textsc{MSC2a3} (MSC).
 
\section{Extended Results} \label{app:extendedresults}

We provide the extended results of our main experiments, including two originally excluded methods, LAFTR and FairPath in \cref{fig:FullMerged}, corresponding to \cref{fig:MainACS} and \cref{fig:MainHealth}.

Additionally, in~\cref{tab:transferext} we provide extended results of our transfer learning experiments, showing the accuracy values for thresholds $\Delta^{DP}_{\zdi_0, \zdi_1} \in \{0.30, 0.20, 0.15, 0.10, 0.05\}$. We can observe similar trends as shown in~\cref{tab:transfer}.

\section{Computational Efficiency Experiments} \label{app:efficiency}

\begin{table*}[t] 
    \centering

	\renewcommand{\arraystretch}{1.2}
    \newcommand{\threecol}[1]{\multicolumn{3}{c}{#1}}

    \resizebox{0.9\textwidth}{!}{
    \begin{tabular}{@{}rrrrrrrrrr@{}}
    \toprule
    & \threecol{Time} & \threecol{RAM} & \threecol{VRAM} \\ 
	& M=1 & M=4 & M=16 & M=1 & M=4 & M=16 & M=1 & M=4 & M=16 \\
      \midrule
      \text{FARE}  & \textbf{3 sec} & \textbf{11 sec} & \textbf{55 sec} & \text{2.6 GB} & \text{3.0 GB} & \text{9.1 GB} & \textbf{0.0 GB} & \textbf{0.0 GB} & \textbf{0.0 GB} \\
    \text{FNF}  & \text{19 min} & \text{1 h 10 min} & \text{4 h 15 min} & \textbf{2.3 GB} & \text{4.3 GB} & \text{11.1 GB} & \text{1.6 GB} & \text{2.0 GB} & \text{3.2 GB}  \\
    \text{FCRL} & \text{57 min} & \text{3 h 51 min} & \text{15 h 38 min} & \text{2.4 GB} & \textbf{2.4 GB} & \textbf{2.5 GB} & \text{1.2 GB} & \text{2.7 GB} & \text{8.7 GB}  \\
    \text{CVIB} & \text{42 min} & \text{2 h 54 min} & \text{11 h 30 min} & \text{2.4 GB} & \textbf{2.4 GB} & \textbf{2.5 GB} & \text{1.2 GB} & \text{2.7 GB} & \text{8.7 GB}  \\
    \text{sIPM-LFR}& \text{23 min} & \text{1 h 35 min} & \text{OOM} & \text{3.1 GB} & \text{7.4 GB} & \text{$>$24.5 GB} & \text{2.0 GB} & \text{6.4 GB} & \text{$>$12 GB}  \\
    \bottomrule
    \end{tabular}}
    \caption{A study of computational efficiency of FRL methods. }
    \label{table:efficiency}
\end{table*} 

We investigate the computational efficiency and scalability of FARE in comparison to the baselines.
We perform our measurements using the ACSIncome-CA dataset repeated $M$ times (to observe the effects of data size, as in~\cref{fig:bigdata}). 
For each method, we use a single set of parameters (\ie a single point from \cref{fig:MainACS}, left). 
We use a \emph{single core} of the i9-7900X CPU Intel CPU that has clock speed of 3.30GHz. 
All methods were given a single NVIDIA 1080 Ti GPU with 12 GB of VRAM, except FARE which does not require a GPU. 
We report CPU RAM, GPU RAM (VRAM) and runtimes for all methods for $M \in \{1, 4, 16\}$. 

The results are shown in~\cref{table:efficiency}.
As we noted in the main paper, FARE takes seconds to execute, which is orders of magnitude faster compared to all other methods, while also not requiring GPU support (\ie we have VRAM requirements of $0.0$ GB in~\cref{table:efficiency}). 
Further, FARE’s runtime scales at most linearly with the dataset size, and its RAM usage scales well with $M$, and can easily be scaled to very large $M$ on moderate hardware (as few 100s of GBs of CPU RAM are common on modern systems). 
In contrast, all other methods take hours for large $M$ even with GPUs. 
FCRL, CVIB and sIPM-LFR use significant GPU RAM (scaling poorly with $M$). 
While FNF’s GPU memory usage comparatively scales well with $M$, it still uses more CPU RAM than FARE.
 
\section{Additional Experiments} \label{app:moreexperiments}

In this section, we provide an ablation study of $\bar{k}$ and additional experiments investigating the robustness of FARE to different downstream classifiers, dataset size, distribution shifts, missing data, and sensitive attribute imbalance.

\subsection{Experiments with different downstream classifiers} \label{app:moreexp:downstream}
We compare different FRL methods on ACSIncome-CA, similarly to \cref{fig:MainACS}~(left), in the case of different downstream classifiers. In \cref{fig:cadown}, we show the results on 4 additional downstream classifiers:
\begin{itemize}
	\item{Decision Tree with maximum $2500$ leaves }
	\item{Random Forest using $100$ trees }
	\item{Logistic Regression}
	\item{Two-layer neural network with $50$ neurons per layer}
\end{itemize} 

We observe that the general trends observed in \cref{fig:MainACS}~(left) for the different FRL methods hold regardless of the downstream classifier choice. We also see that the gap of our method to the maximum achievable accuracy is the smallest when the downstream classifier is a tree. This is unsurprising given that FARE's own representations are based on trees. Further, we see that the complex feature extraction of FCRL and sIPM enables higher accuracy than the unfair baseline in case of a simple classifier such as a decision tree.

\subsection{Ablation study} \label{app:moreexp:ablation}
We present an ablation study of $\bar{k}$ in \cref{table:ablation}.
We explore 3 different settings of FARE: (i) \emph{Fair}, with $\gamma=0.999, \underline{n_i}=1000, v=0.5$, \emph{Balanced}, with $\gamma=0.85, \underline{n_i}=100, v=0.3$, and \emph{Accurate}, with $\gamma=0.3, \underline{n_i}=10, v=0.1$. 
In each setting we do a run with each $\bar{k} \in \{3,5,8,20,50\}$, and measure the accuracy, DP distance (unfairness), and $T^\star$, the DP distance certificate.
We can observe that in all three settings, as expected, increasing $k$ generally improves accuracy, but makes the certificate higher (and looser).
 
\begin{table*}[t] 
    \centering

	\renewcommand{\arraystretch}{1.2}
    \newcommand{\threecol}[1]{\multicolumn{3}{c}{#1}}

    \resizebox{0.7\textwidth}{!}{
    \begin{tabular}{@{}rccccccccc@{}}
    \toprule
    & \threecol{\emph{Fair}} & \threecol{\emph{Balanced}} & \threecol{\emph{Accurate}} \\ 
	$\bar{k}$ & Acc. & DP Dist. & $T^\star$ & Acc. & DP Dist. & $T^\star$ &  Acc. & DP Dist. & $T^\star$ \\
      \midrule
	  $3$ & 0.735 & \textbf{0.001} & \textbf{0.028} & 0.786 & \textbf{0.029} & \textbf{0.067} & 0.786 & 0.081 & \textbf{0.138} \\
	  $5$ & 0.760 & \textbf{0.001} & 0.035 & 0.800 & 0.057 & 0.105 & 0.799 & 0.074 & 0.150 \\
	  $8$ & 0.773 & 0.007 & 0.041 & 0.800 & 0.057 & 0.113 & 0.802 & 0.080 & 0.163 \\
	  $20$ & 0.774 & 0.008 & 0.072 & 0.803 & 0.045 & 0.144 & 0.810 & 0.062 & 0.229 \\
	  $50$ & \textbf{0.777} & 0.004 & 0.082 & \textbf{0.805} & 0.049 & 0.191 & \textbf{0.812} & \textbf{0.058} & 0.303 \\
    \bottomrule
    \end{tabular}}
    \caption{An ablation study of the FARE hyperparameter $\bar{k}$.}
    \label{table:ablation}
 \end{table*} 

 \begin{figure*}[t]
    \newcommand{\relSubfigWidth}{0.38\textwidth}
    \newcommand{\innerWidth}{\textwidth}
    \centering
    \hfill
    \begin{subfigure}{\relSubfigWidth}
        \centering 
        \includegraphics[width=\innerWidth]{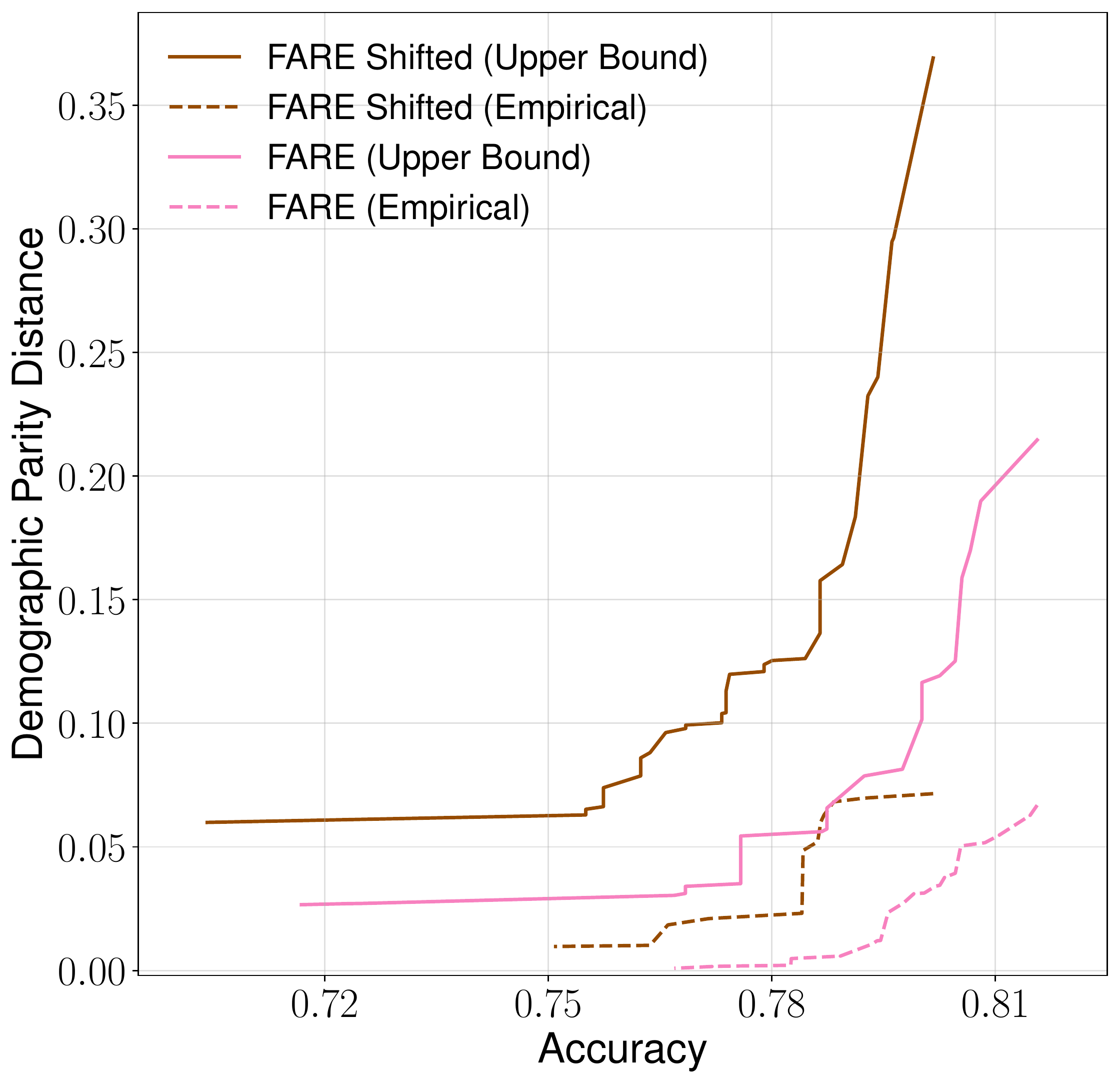}
    \end{subfigure}
    \hfill
    \hspace{1.8em}
    \begin{subfigure}{\relSubfigWidth}
      \centering
      \includegraphics[width=\innerWidth]{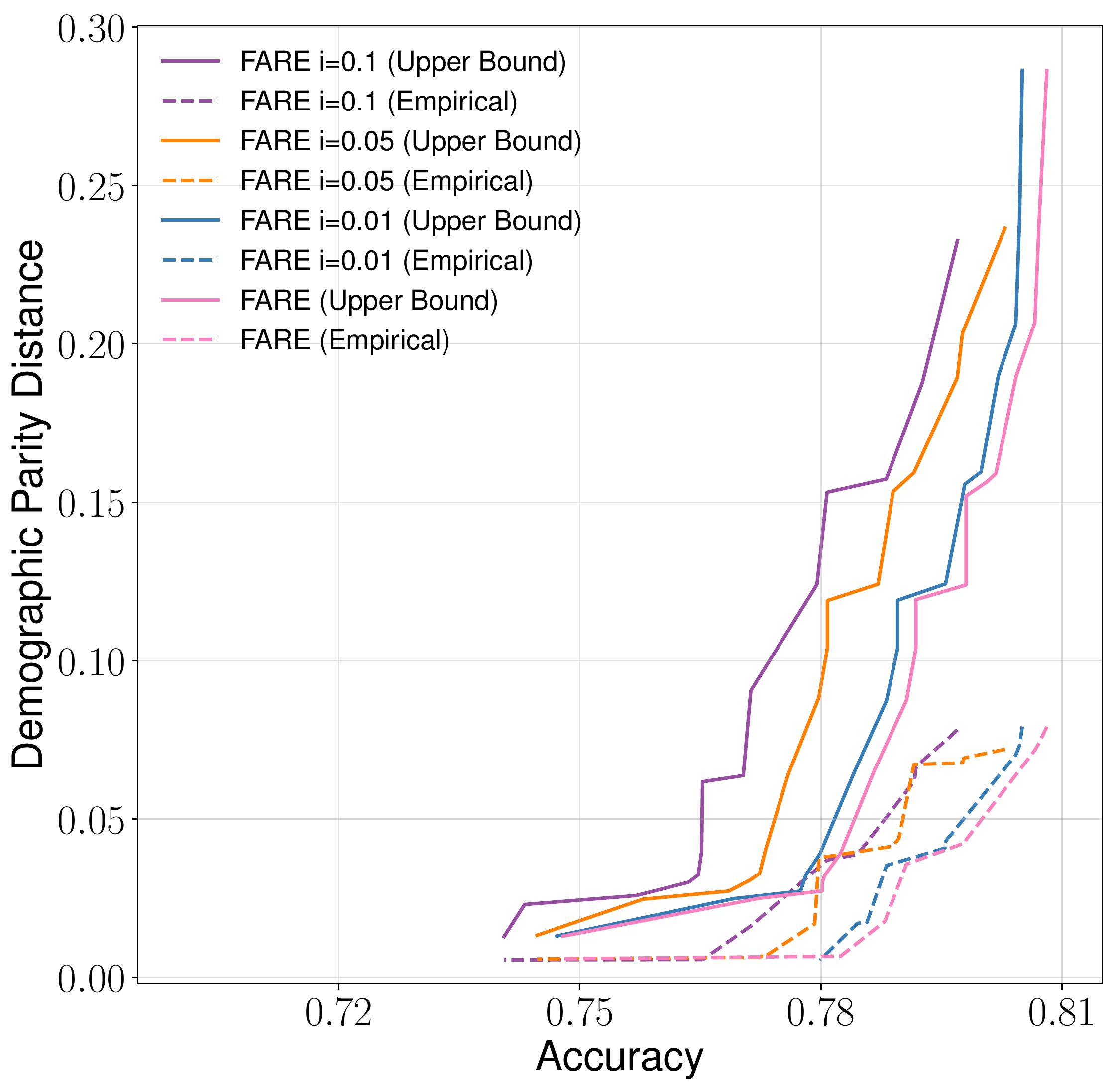}
    \end{subfigure}
    \hspace{1.5em}
    \hfill
    \caption{Exploring the robustness of FARE to distribution shifts and missing values.}
    \label{fig:shiftimpute}
\end{figure*}  
 
\begin{figure}[t]
	\newcommand{\relSubfigWidth}{0.28\textwidth}
	\newcommand{\innerWidth}{\textwidth}
	\centering
	\begin{subfigure}[b]{\relSubfigWidth}
		\centering 
		\includegraphics[width=\innerWidth]{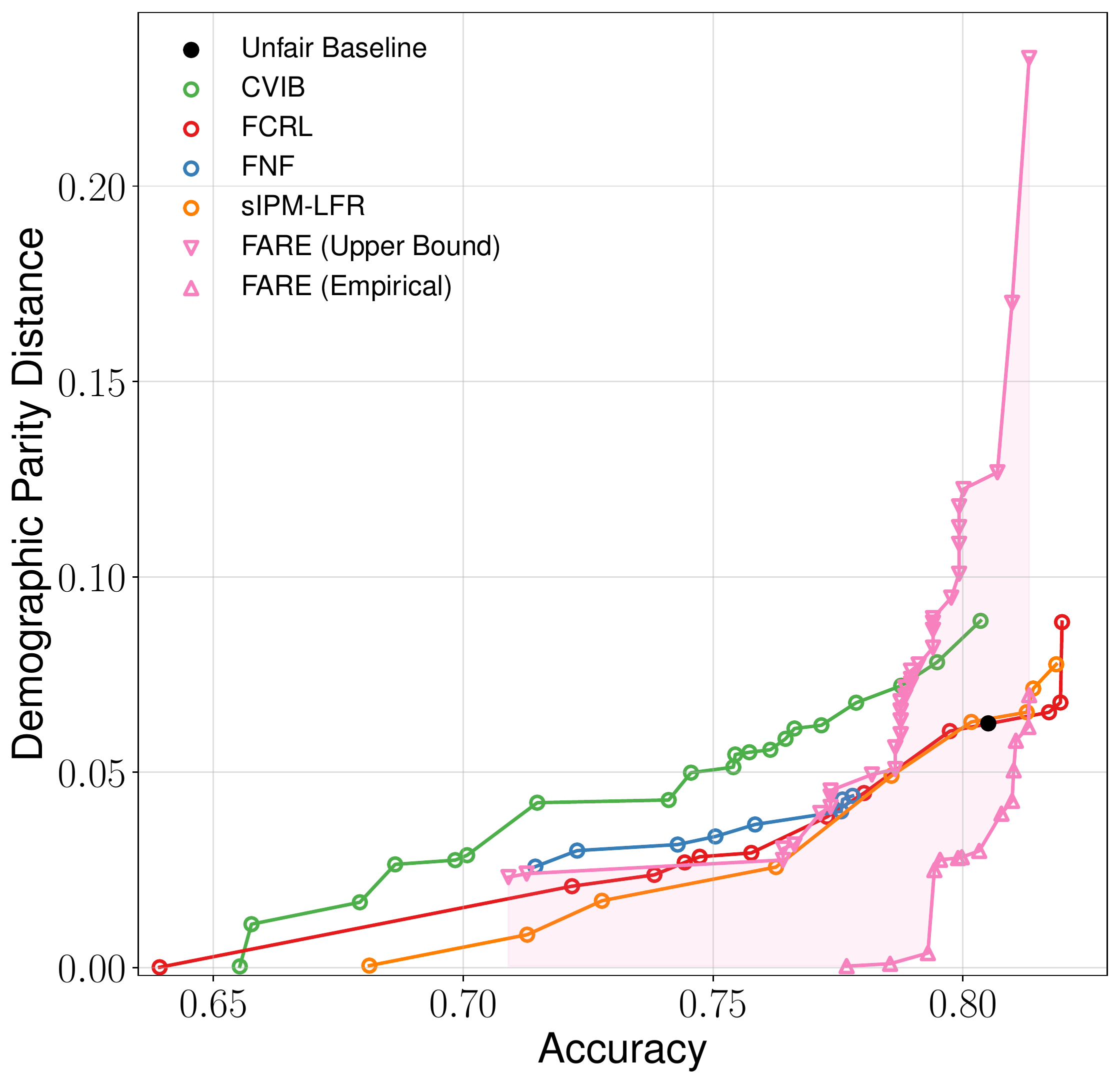}
		\caption{Decision Tree}
		\vspace{0.3em}
	\end{subfigure}
	\begin{subfigure}[b]{\relSubfigWidth}
		\centering
		\includegraphics[width=\innerWidth]{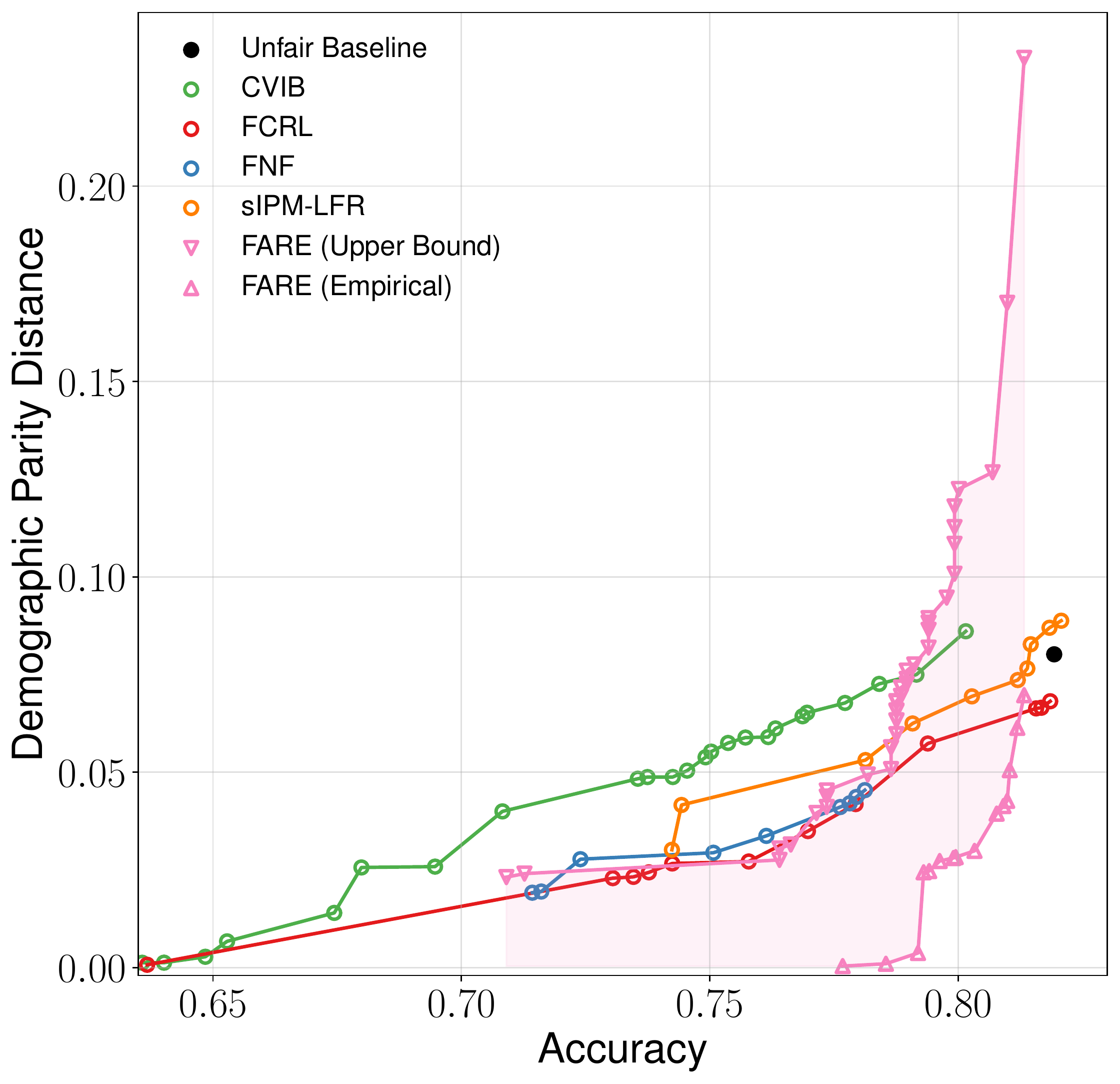}
		\caption{Random Forest}
		\vspace{0.3em}
	\end{subfigure}
	\begin{subfigure}[b]{\relSubfigWidth}
		\centering 
		\includegraphics[width=\innerWidth]{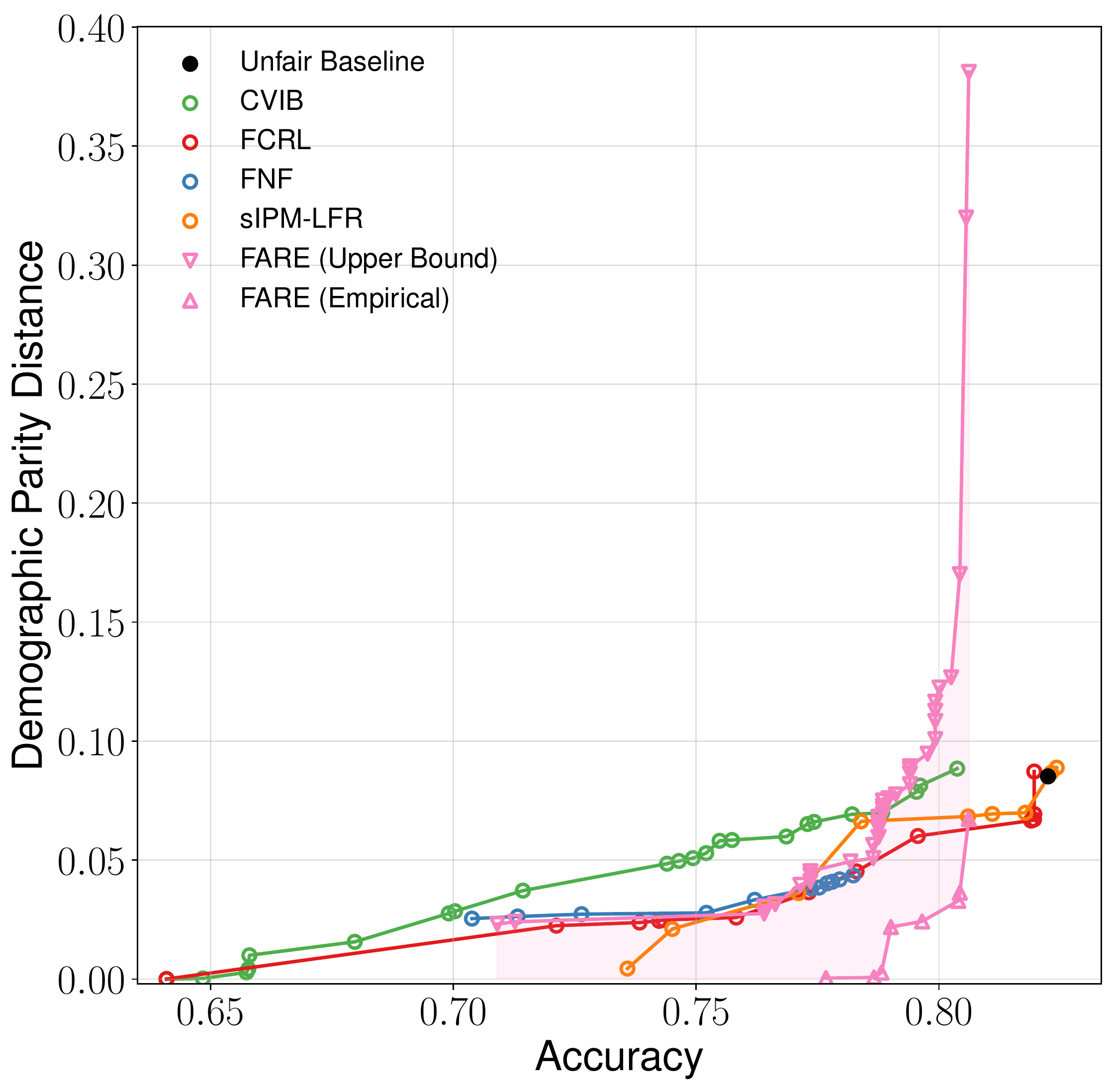}
		\caption{Logistic Regression}
		\vspace{0.3em}
	\end{subfigure}
	\begin{subfigure}[b]{\relSubfigWidth}
		\centering
		\includegraphics[width=\innerWidth]{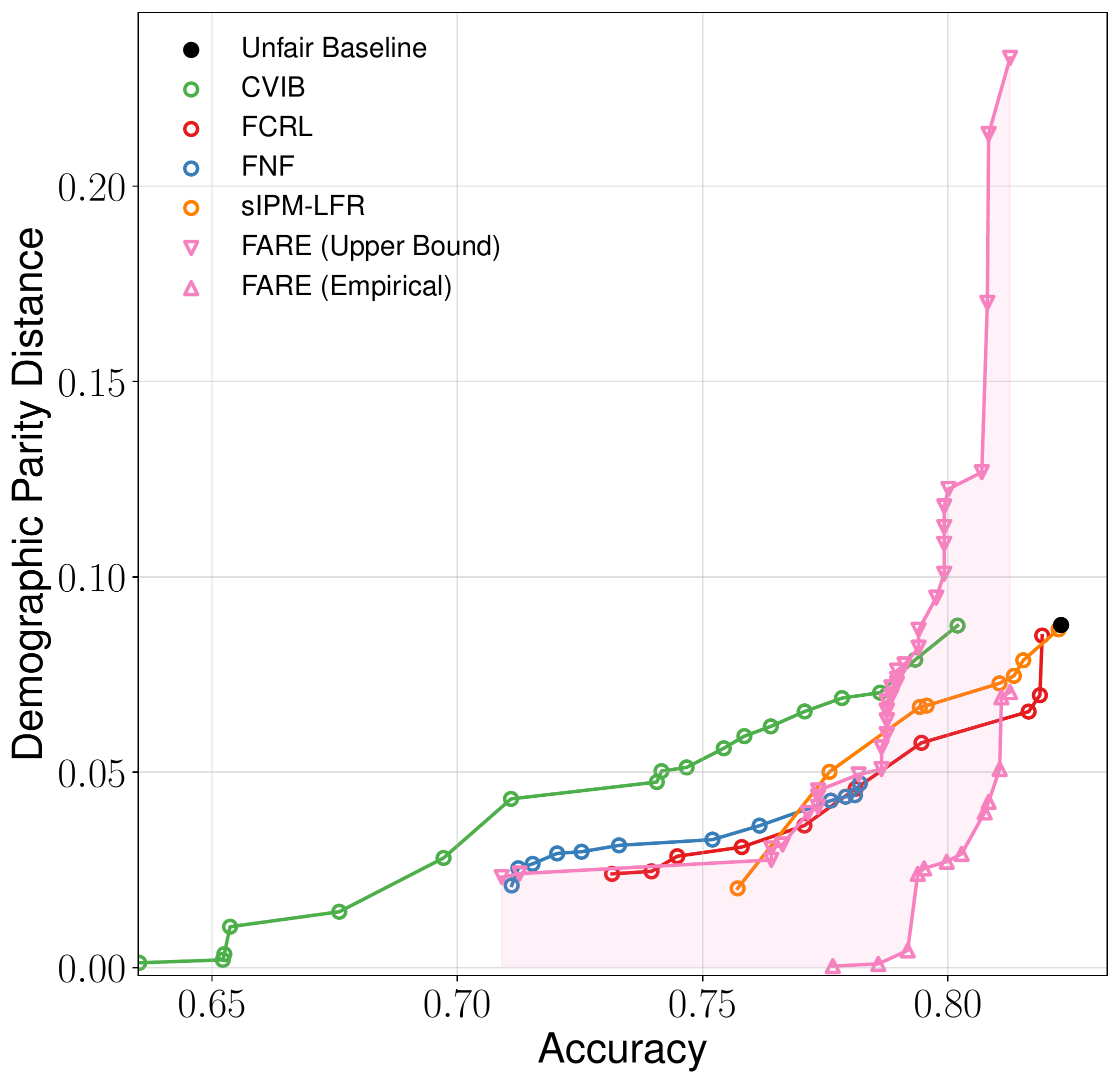}
		\caption{Two-Layer NN}
	\end{subfigure}
	\caption{Comparison between different downstream classifiers on different FRL methods on ACSIncome-CA.}
	\label{fig:cadown}
\end{figure}

\subsection{Performance gap on larger datasets} \label{app:moreexp:larger}
\begin{figure}[t]
	\centering
	\includegraphics[width=0.9\linewidth]{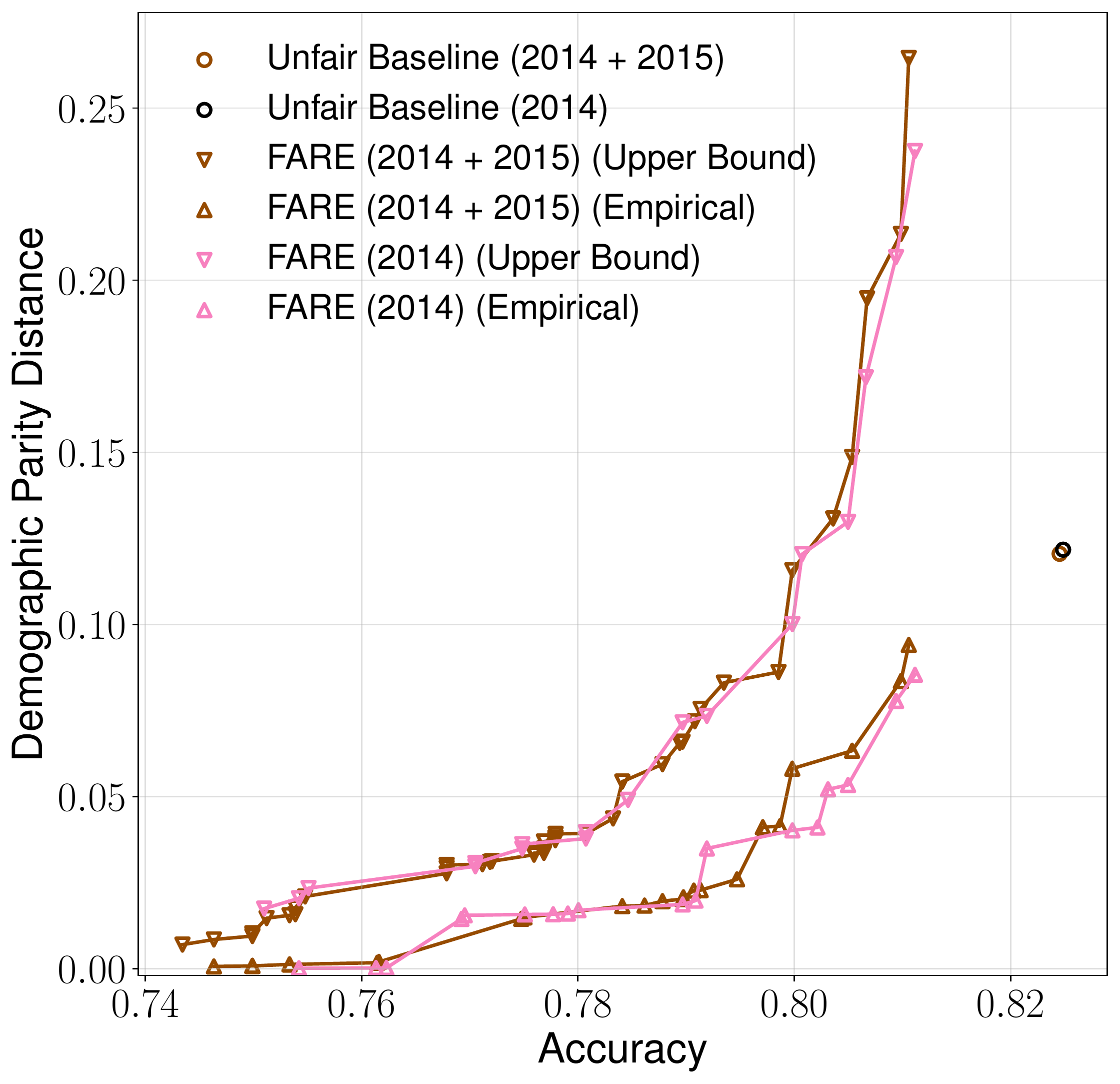}
	\caption{Comparison of the performance gap between FARE and the Unfair Baseline on ACSIncome-US for a single year and two years.}
	\label{fig:2years}
\end{figure}

Next, we explore whether the small performance gap we observed in our main results, between FARE's most accurate model and the unfair baseline, widens for larger datasets. To this end, we merge two ACSIncome-US datasets from two consecutive years (2014 and 2015) and compare the results to the single year dataset from 2014, shown in \cref{fig:MainACS}~(right). We note that the merged dataset has roughly 2x the number of data points.
The comparison between the merged and single-year datasets is shown in \cref{fig:2years}. We observe almost no difference between the results on the two datasets for the unfair baseline as well as the empirical and provable fairness of our method. This suggests that the complexity of the dataset is a more important factor than the data volume for the observed performance gap.

\subsection{Distribution shifts and imputation} \label{app:moreexp:shiftimpute}

Next, we briefly consider two aspects that were not our main focus---the robustness to non-IID (distribution shift) and missing data.
To measure distribution shift, we train FARE on ACSIncome-CA data from 2015 and compare its results when evaluated on test data from 2015 (the standard case) and 2016 (the \emph{FARE Shifted} case).
Regarding missing data, we use ACSIncome-US (as in \cref{fig:MainACS}, right), and split its data in two parts, training FARE on the first part, and evaluating it (\ie training downstream classifiers and computing the certificate) on the other part, where we randomly remove the fraction $i$ of feature values, and impute them (using the most common value for categorical, and mean for continuous features). We use values $i \in \{0, 0.01, 0.05, 0.1\}$.

The results are shown in~\cref{fig:shiftimpute}. 
In both cases, as expected, we see some degradation of results---such concerns could be studied more in follow-up work.

\subsection{Effect of sensitive attribute imbalance} \label{app:moreexp:sens}
\begin{figure}[t]
	\centering
	\includegraphics[width=0.9 \linewidth]{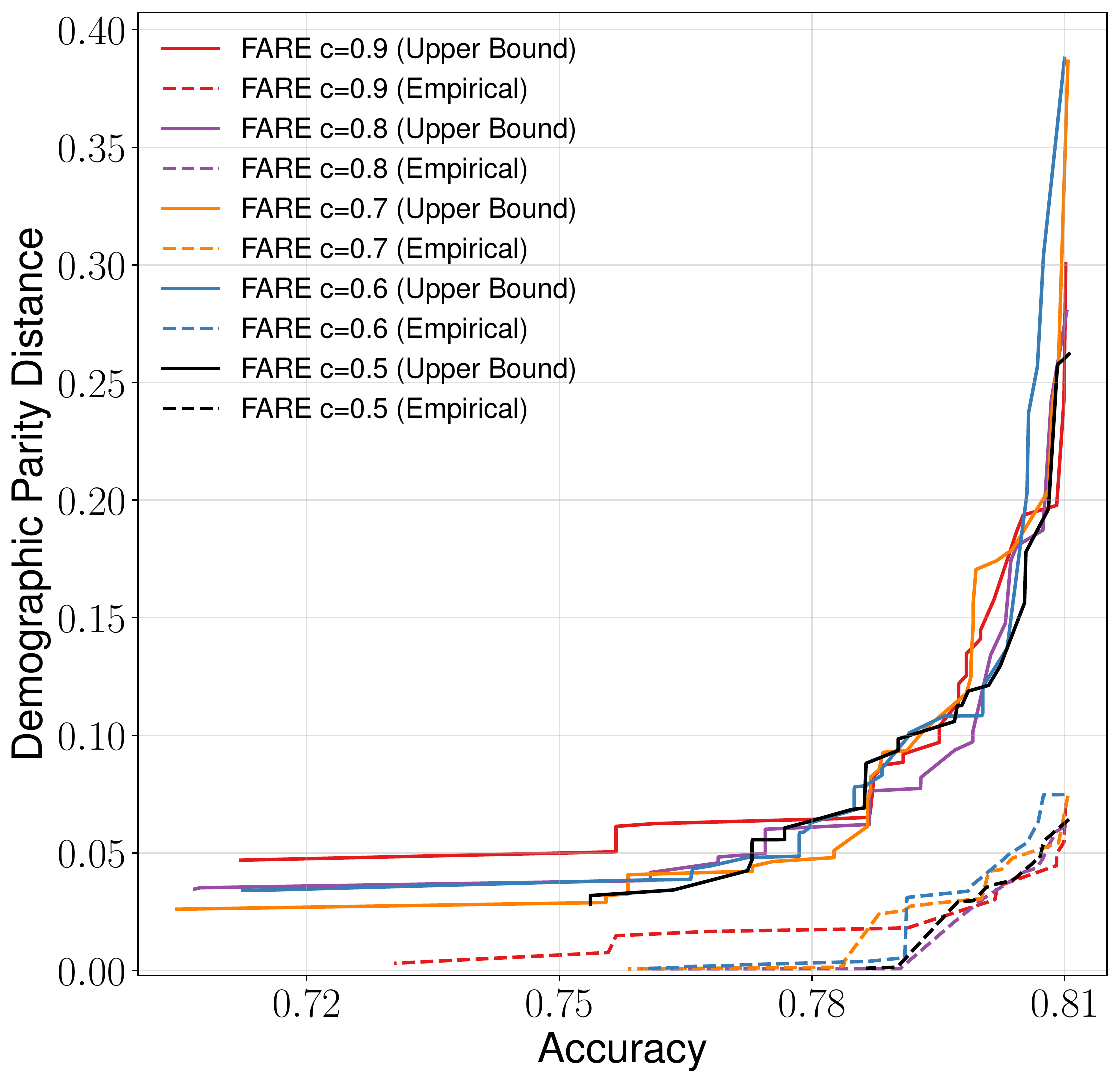}
	\caption{Evaluation of FARE at different levels of imbalance in the sensitive attribute (denoted by $c$) on randomly sampled subsets of ACSIncome-CA of the same size.}
	\label{fig:unbalance}
\end{figure}
Finally, we study the effect of imbalance in the sensitive attribute on the fairness and accuracy of FARE.
Let $c$ denote the level of imbalance of each training set (i.e., the number of data points in the more common sensitive class divided by the total number of data points in the set).
For each value of $c$ we are interested in, we sample a random subset of size 49 053 from the original ACSIncome-CA training dataset (out of 165 546 data points in total), ensuring that the level of imbalance is exactly $c$.
We use this number of samples, as this is the largest number for which we can have the same dataset size for each $c$, ensuring the fair comparison.

We train FARE on each subset separately and show Pareto plots, similar to those in \cref{fig:MainACS} and \cref{fig:MainHealth}, in \cref{fig:unbalance}.  We observe that FARE is robust to imbalance, as even for $c=0.9$, we only see a small difference in our Pareto curves (and only in the low-accuracy regime).

\end{document}